\PassOptionsToPackage{prologue,table,dvipsnames}{xcolor}
\documentclass{article} %
\usepackage{iclr2027_conference,times}

\usepackage{amsmath,amsfonts,bm}

\def\eqref#1{equation~\ref{#1}}

\def\1{\bm{1}}

\DeclareMathAlphabet{\mathsfit}{\encodingdefault}{\sfdefault}{m}{sl}
\SetMathAlphabet{\mathsfit}{bold}{\encodingdefault}{\sfdefault}{bx}{n}

\usepackage{mathtools}
\usepackage{enumitem}

\usepackage{xcolor}   %
\usepackage{hyperref}
\usepackage{url}

\definecolor{SelfColor}{rgb}{0.913,0.443,0.196}
\definecolor{UrlColor} {rgb}{0.7098,0.009,0.0}
\definecolor{RefColor} {rgb}{0.082,0.376,0.510}
\hypersetup{
    colorlinks=true,
    linkcolor=UrlColor,   %
    urlcolor=UrlColor,    %
    citecolor=RefColor    %
}

\title{Constant-Curvature Sliced \\Gromov–Wasserstein for Heterogeneous Cross-Curvature Alignment}

\author{Shanglin Li$^{1,2,5\dagger}$, Wenjing Lu$^{3,5\dagger}$, Muyang Li$^{4}$, Nicu Sebe$^{6}$ \& Ziheng Chen$^{6,7}\thanks{Corresponding author.}$    \\
$^{1}$ BIFOLD, Germany, $^{2}$ Technical University of Berlin, Germany\\
$^{3}$ Shanghai Jiao Tong University, China,
$^{4}$ The University of Sydney, Australia\\
$^{5}$ RIKEN AIP, Japan,
$^{6}$ University of Trento, Italy \\
$^{7}$ Max Planck Institute for Intelligent Systems, Germany\\
$^{\dagger}$ Equal contribution\\
}

\usepackage{adjustbox}
\usepackage{booktabs} 
\usepackage{amsmath}
\usepackage{amsthm}
\theoremstyle{plain}
\newtheorem{theorem}{Theorem}[section]

\newtheorem{lemma}[theorem]{Lemma}

\theoremstyle{definition}

\newtheorem{assumption}[theorem]{Assumption}
\theoremstyle{remark}

\usepackage{cleveref}
\crefname{equation}{Eq.}{Eqs.}
\Crefname{equation}{Equation}{Equations}
\crefname{figure}{Fig.}{Figs.}
\Crefname{figure}{Figure}{Figures}
\crefname{table}{Tab.}{Tabs.}
\Crefname{table}{Table}{Tables}
\crefname{section}{Sec.}{Secs.}
\Crefname{section}{Section}{Sections}
\crefname{appendix}{App.}{Apps.}
\Crefname{appendix}{Appendix}{Appendices}

\crefname{theorem}{Thm.}{Thms.}
\Crefname{theorem}{Theorem}{Theorems}
\crefname{lemma}{Lem.}{Lems.}
\Crefname{lemma}{Lemma}{Lemmas}
\crefname{definition}{Def.}{Defs.}
\Crefname{definition}{Definition}{Definitions}
\crefname{corollary}{Cor.}{Cors.}
\Crefname{corollary}{Corollary}{Corollaries}
\crefname{remark}{Rmk.}{Rmks.}
\Crefname{remark}{Remark}{Remarks}
\crefname{proposition}{Prop.}{Props.}
\Crefname{proposition}{Proposition}{Propositions}

\DeclarePairedDelimiterX{\inner}[2]{\langle}{\rangle}{#1,\,#2}

\usepackage{pifont}
\usepackage{algorithm}
\usepackage{algorithmic}

\usepackage{graphicx} %

\usepackage{adjustbox}
\usepackage{color}
\usepackage[textsize=tiny]{todonotes}
\iclrfinalcopy %

\usepackage{subcaption}

\begin{document}

\maketitle

\begin{abstract}
Recent advances in representation learning have highlighted the utility of constant-curvature models, such as hyperbolic and spherical spaces, for modeling complex data. 
Mixed-curvature models further enhance this by integrating multiple constant-curvature components.
However, these models typically learn each component space independently because spaces with different curvatures are inherently heterogeneous and lack a unified metric. 
Consequently, they lack explicit mechanisms to enforce geometric consistency across various spaces. 
Moreover, the problem of comparing probability distributions across mixed-curvature spaces remains unexplored.
To compare distributions on heterogeneous spaces, Gromov–Wasserstein (GW) distances provide a principled framework by aligning their intra-space geometries.
Building on this, we propose constant-curvature sliced Gromov–Wasserstein (CCSGW), a novel divergence for aligning distributions supported on heterogeneous constant-curvature spaces. 
We first introduce the missing geodesic-based one-dimensional projections for spherical spaces, and then extend sliced GW to constant-curvature spaces, enabling efficient and principled comparison across manifolds with different curvatures.
This formulation preserves intrinsic geometric relationships while avoiding the high computational cost. 
We provide theoretical analysis showing that CCSGW controls intrinsic geometric discrepancy across heterogeneous spaces, promoting distribution-level geometric consistency.
By integrating CCSGW into existing mixed-curvature learning tasks, including graph anomaly detection, graph node classification, and multimodal learning, we observe consistent performance gains across diverse settings.
\end{abstract}

\section{Introduction}

Euclidean geometry has long been the standard representation space in machine learning. 
Recently, non-Euclidean spaces, particularly hyperbolic and spherical spaces, have gained attention by providing better representations for certain types of structured data \citep{gu2018learning}.
Hyperbolic space is well-suited to hierarchical organization \citep{khrulkov2020hyperbolic, peng2021hyperbolic,li2026heegnet}, while spherical space effectively captures periodic patterns \citep{cohen2018spherical, meng2019spherical}. 
These spaces can be viewed as special cases of constant-curvature geometries, i.e., geometric spaces whose curvature is uniform everywhere.
Hyperbolic, spherical, and Euclidean spaces are the three types of constant-curvature geometries, corresponding to negative, positive, and zero curvature, respectively \citep{lee2006riemannian}.

Changing the geometry of the underlying latent space can lead to more expressive representations \citep{nickel2017poincare}, with curvature playing a central role. 
However, many real-world datasets do not admit a single geometric structure \citep{peng2021hyperbolic}. 
As a result, any single constant-curvature geometry, Euclidean, hyperbolic, or spherical, may be insufficient to capture all structural nuances simultaneously \citep{chlenskimixed}.
To model real-world data with complex topologies, 
various mixed-curvature models have been proposed that operate across multiple constant-curvature spaces \citep{skopek2020mixed,groverspectro,he2025helm,grover2025curvgad,chen2025galaxy,chlenski2025mixed,gao2025mixed,gu2018learning}.
Mixed-curvature models partition the space into multiple component spaces and learn or fix a curvature for each part, aiming to capture the complex geometric topology \citep{gu2018learning}.

However, these approaches typically learn each component space independently and lack an explicit mechanism to enforce geometric consistency across spaces \citep{gu2018learning}.
Enhancing geometric consistency imposes an additional inductive prior to regularize the representation learning, in cases such as knowledge distillation \citep{park2019relational}, graph matching \citep{xu2019gromov}, and cross-lingual word matching \citep{alvarez2018gromov}.
This mismatch can lead to distortions of the global relational structure, hindering coherent representation learning and potentially degrading downstream performance.
Since data are embedded into heterogeneous metric spaces with different curvatures, distances computed independently within each space are generally not directly comparable.
To address this issue, a principled way to encourage geometric consistency across heterogeneous spaces is needed.

\begin{figure*}[t]
    \centering
    \includegraphics[width=\textwidth]{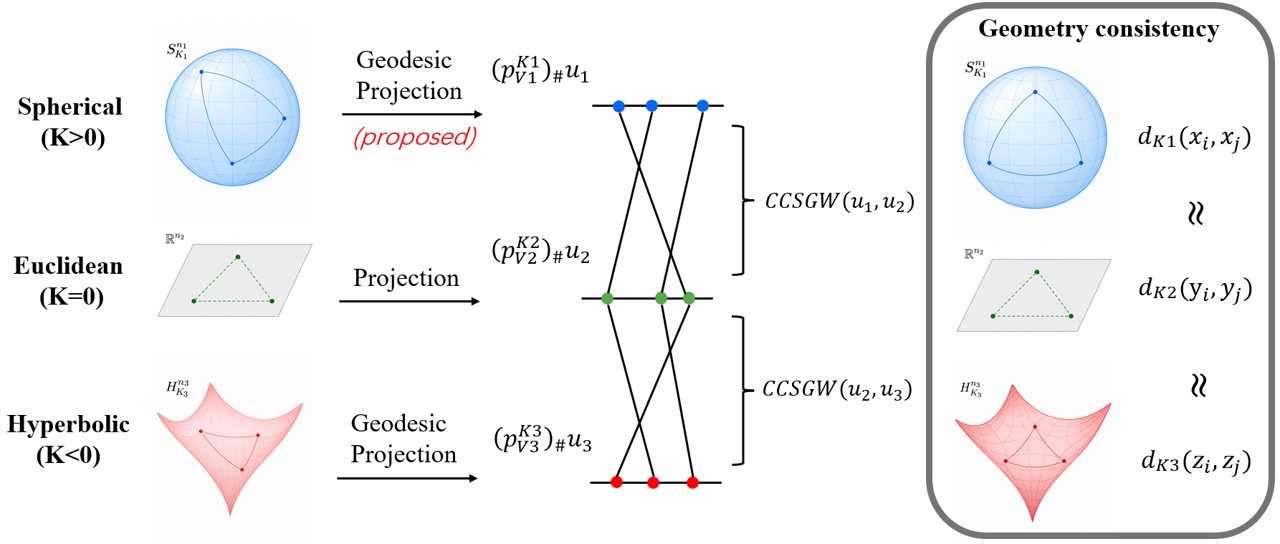}
    \caption{\textbf{CCSGW Overview. } 
    Distributions supported on spherical, Euclidean, and hyperbolic spaces are projected onto one-dimensional coordinates using geometry-aware projections. 
    CCSGW then compares the projected distributions across heterogeneous constant-curvature spaces, encouraging consistency of their intrinsic geometric structures despite differences in curvature.
}
    \label{fig:ccsgw}
\end{figure*}

In this study, we propose constant-curvature sliced-Gromov Wasserstein (CCSGW), a new divergence for comparing distributions on heterogeneous constant-curvature spaces.
CCSGW is an extension of sliced-Gromov–Wasserstein (SGW) \citep{vayer2019sliced} to constant-curvature spaces, allowing explicit alignment across manifolds to enforce geometric consistency.
SGW compares distriptions that do not lie in the same metric space based on their internal pairwise distance structures, while making the comparison computationally efficient by reducing it to a collection of one-dimensional GW problems on the real line.
This property makes SGW particularly well suited to mixed-curvature representations, where components with different curvatures naturally constitute heterogeneous metric spaces and their intrinsic distances are not directly comparable across spaces.
Building on this, we first derive geodesic-based one-dimensional projections for spherical spaces, completing the one-dimensional slicing framework across hyperbolic, Euclidean, and spherical geometries, and then introduce CCSGW.
CCSGW enables meaningful comparison and alignment of distributions across mixed-curvature spaces to preserve distribution-level geometric consistency across mixed-curvature representations (\cref{fig:ccsgw}).
Through theoretical analysis, we show that CCSGW controls the intrinsic Gromov–Wasserstein discrepancy between heterogeneous constant-curvature spaces up to projection error. 
Therefore, minimizing CCSGW encourages approximately distance-preserving alignment for most sample pairs, providing a theoretical justification for distribution-level geometric consistency.
Extensive experiments on simulations and mixed-curvature models for graph anomaly detection \citep{grover2025curvgad}, graph node classification \citep{groverspectro}, and multimodal learning \citep{zhou2026eeg} demonstrate that integrating CCSGW achieves superior performance.

We make three main contributions:
\begin{itemize}[leftmargin=*,itemsep=2pt,topsep=3pt]
    \item We introduce CCSGW, a geometry-aware divergence for aligning
    distributions across heterogeneous constant-curvature spaces, together with
    geodesic-based one-dimensional projections for spherical spaces.

    \item We provide theoretical guarantees connecting CCSGW to intrinsic
    Gromov--Wasserstein discrepancy and approximate pairwise distance preservation.

    \item Experiments on graph anomaly detection, node classification, and
    multimodal learning show consistent improvements when CCSGW is integrated
    into existing mixed-curvature models.
\end{itemize}

\section{Preliminaries}

\subsection{Riemannian manifold}

\textbf{Constant curvature spaces.}
A constant-curvature space can be viewed as a smooth Riemannian manifold with curvature $K \in \mathbb{R}$. The $K$-radius model provides an extrinsic representation of such spaces \citep{skopek2020mixed}, including the sphere $\mathbb{S}_K^n$, Euclidean space $\mathbb{R}^n$, and hyperboloid $\mathbb{H}_K^n$:

\begin{equation}
\label{eq_ccs}
\mathcal{M}_K^n =
\begin{cases}
\mathbb{S}_K^n = \left\{ x \in \mathbb{R}^{n+1} : \|x\|^2 = \frac{1}{K} \right\} 
& \text{for } K > 0,\ \text{spherical geometry}, \\[6pt]
\mathbb{R}^n 
& \text{for } K = 0,\ \text{Euclidean geometry}, \\[6pt]
\mathbb{H}_K^n = \left\{ x \in \mathbb{R}^{n+1} : \|x\|_L^2 = \frac{1}{K},\ x_t > 0 \right\} 
& \text{for } K < 0,\ \text{hyperbolic geometry}.
\end{cases}
\end{equation}
where $\|x\|_L^2 = \|x_s\|^2 - x_t^2$ is the Lorentz inner product.
Following the conventions of the hyperboloid, we write $x = (x_t, x_s^\top)^\top$, with $x_t \in \mathbb{R}$ as the time component and $x_s \in \mathbb{R}^n$ as the spatial component \citep{ratcliffe2006foundations}.

A generalization of straight lines in Euclidean spaces to Riemannian manifolds is given by geodesics, which are smooth curves connecting two points $x, y \in \mathcal{M}$ with minimal length, i.e., continuous mappings $\gamma : [0,1] \to \mathcal{M}$ such that $\gamma(0)=x$, $\gamma(1)=y$.
Following \citep{bonet2023hyperbolic}, we focus on geodesically complete Riemannian manifolds, in which case there is always a geodesic between two points $x, y \in \mathcal{M}$. 
Furthermore, in this specific case, all geodesics are actually geodesic lines, i.e., they can be extended to $\mathbb{R}$. 
Let $x, y \in \mathcal{M}$, and $\gamma : [0,1] \to \mathcal{M}$ be a geodesic between $x$ and $y$ such that $\gamma(0)=x$ and $\gamma(1)=y$.
$\operatorname{PT}_{x \rightarrow y}$ denotes the parallel transport used to transport tangent vectors from the tangent space at $x$ to the tangent space at $y$. 
 
\textbf{Mixed-curvature spaces.}
A mixed-curvature space is defined as the Cartesian product of constant-curvature manifolds \citep{skopek2020mixed}. 
Let each component $\mathcal{M}_{K_j}^{n_j}$ be a space of constant curvature $K_j$ and dimension $d_j$. 
The product space is
\begin{equation}
\mathcal{M}
=
\prod_{j=1}^{m} \mathcal{M}_{K_j}^{\,n_j}.
\end{equation}
For a point $u = (u_1, \dots, u_m) \in \mathcal{M}$, tangent spaces and maps decompose component wise:
\begin{equation}
\begin{aligned}
T_u \mathcal{M}
&=
\prod_{j=1}^{m} T_{u_j} \mathcal{M}_{K_j}^{\,n_j}, \\
\operatorname{Exp}^{\mathcal{M}}_{u}(q)
&=
\bigl(
\exp^{K_1}_{u_1}(q_1), \dots, \exp^{K_m}_{u_m}(q_m)
\bigr), \\
\operatorname{Log}^{\mathcal{M}}_{u}(x)
&=
\bigl(
\log^{K_1}_{u_1}(x_1), \dots, \log^{K_m}_{u_m}(x_m)
\bigr).
\end{aligned}
\end{equation}

\subsection{The sliced-Wasserstein}

The Wasserstein distance is a powerful tool to compare probability distributions, but its computational cost can become prohibitive in high dimensions. 
A key observation is that in one dimension, the Wasserstein distance admits a closed-form solution that can be computed efficiently by sorting samples, leading to a complexity of $O(n \log n)$.
The sliced Wasserstein (SW) distance leverages this property by projecting high-dimensional distributions onto one-dimensional subspaces. 
In practice, the integral is approximated using a finite number of random projections, leading to an efficient approximation with complexity $O(L\, n \log n)$, where $L$ is the number of sampled directions.

\textbf{The Sliced-Gromov Wasserstein.}
We first recall the definition of the Gromov–Wasserstein (GW) distance. 
Let $\mu \in \mathcal{P}(\mathbb{R}^p)$ and $\nu \in \mathcal{P}(\mathbb{R}^q)$ be two probability measures. 
The squared GW discrepancy is defined as
\begin{equation}
\label{eq:gw_recall}
GW_2^2(d^2, \mu, \nu) := \min_{\pi \in \Pi(\mu, \nu)} 
\int |d(x, x') - d(y, y')|^2 \, d\pi(x,y)\, d\pi(x',y'),
\end{equation}
where $\Pi(\mu, \nu)$ denotes the set of couplings between $\mu$ and $\nu$, and $d$ is the Euclidean distance. 
Unlike classical optimal transport, which compares points across domains, the GW distance aligns distributions by matching their intrinsic geometries. More precisely, it seeks a coupling $\pi$ such that pairwise distances within $\mu$ are preserved when mapped to $\nu$, i.e., pairs of points $(x, x')$ in the source are matched to pairs $(y, y')$ in the target with similar distances. This allows comparing distributions supported on different spaces or of different dimensions.

The computational cost of \eqref{eq:gw_recall} is prohibitive for large-scale problems ($O(n^3)$ in the discrete case). Inspired by the sliced Wasserstein (SW) distance, \citep{vayer2019sliced} proposed the SGW discrepancy.
Let $\mathbb{S}^{q-1}$ denote the unit sphere equipped with the uniform measure $\lambda_{q-1}$. 
Let $\mu \in \mathcal{P}(\mathbb{R}^p)$ and 
$\nu \in \mathcal{P}(\mathbb{R}^q)$, and let 
$\Delta \in \mathbb{R}^{q \times p}$ be a linear map. 
The SGW divergence is defined as
\begin{equation}
\label{eq:sgw}
\mathrm{SGW}_{\Delta}(\mu,\nu)
:=
\int_{\mathbb{S}^{q-1}}
\mathrm{GW}_2^2
\left(
d^2,\,
(P_{\theta})_{\sharp}\mu_{\Delta},\,
(P_{\theta})_{\sharp}\nu
\right)
\, d\lambda_{q-1}(\theta),
\end{equation}
where $P_{\theta}(x):=\langle x,\theta\rangle$ denotes the 
one-dimensional projection along $\theta \in \mathbb{S}^{q-1}$.
Here, the inner GW is computed in one dimension, where it admits a closed-form solution based on sorting and evaluating only two candidate permutations (identity or anti-identity) \citep{vayer2019sliced}. This leads to a computational cost of $O(n \log n)$ per projection. In practice, the integral in \eqref{eq:sgw} is approximated by averaging over $L$ random directions, yielding an overall complexity of $O(L\, n \log n)$.

\textbf{Slicing on geodesics.}
The Wasserstein distance can be naturally defined on a Riemannian manifold by replacing the Euclidean distance with the geodesic distance \citep{bonet2023hyperbolic}. 
More precisely, given a manifold $(\mathcal{M}, d)$, the optimal transport cost 
between two probability measures is defined using $d(x,y)$, the length of the 
shortest geodesic connecting $x$ and $y$. This provides a natural extension of 
Wasserstein distances to curved spaces.

To generalize slicing to hyperbolic spaces, points are projected onto geodesics, which play the role of one-dimensional subspaces \citep{bonet2023hyperbolic}. 
Given a direction $v$, let $\gamma_v$ denote the geodesic passing through the origin. The projection of a point $x$ onto $\gamma_v$ is defined as
\begin{equation}
P_v(x) := \arg\min_{y \in \gamma_v} d(x, y),
\end{equation}
and its one-dimensional coordinate is given by the signed geodesic distance to the origin:
\begin{equation}
\label{eq_1d_h}
t_v(x) := \operatorname{sign}(\langle x, v \rangle)\, d(x, x_0),
\end{equation}
where $x_0$ denotes the origin. This mapping yields a scalar representation on $\mathbb{R}$ for each point.

\section{Methods}

\subsection{Sphere geodesic projections}
Recent work has extended the sliced-Wasserstein framework to the sphere by projecting measures onto great circles and computing optimal transport directly on the circle \citep{bonet2023spherical}.
In particular, spherical sliced-Wasserstein relies on closed-form formulations of Wasserstein distances on $\mathcal{S}^1$, which require solving a shift-invariant optimal transport problem due to the periodic structure of the circle.
In our work, we aim to project points onto geodesics and further map them to $\mathbb{R}$ via signed geodesic distances. 
This reduces the problem to standard one-dimensional Wasserstein distances, avoiding the need for circular optimal transport and shift alignment.

To extend slicing to spherical spaces, we derive geodesic projection onto a fixed geodesic on the Sphere (Radius Model).
Let $K > 0$ and consider the sphere (radius model) $S_K^n = \{ x \in \mathbb{R}^{n+1} : \|x\|^2 = 1 / K \}$
For a fixed reference point $x_0 \in S_K^n$ and a fixed unit tangent vector $v \in T_{x_0} S_K^n$ with $\|v\| = 1$.
The geodesic starting from $x_0$ in direction $v$ is given by the exponential map
\begin{equation}
    \gamma(t) = \mathrm{Exp}_{x_0}(t v) = \cos(\sqrt{K}\, t)\, x_0 + \frac{\sin(\sqrt{K}\, t)}{\sqrt{K}}\, v,\qquad t \in \mathbb{R}.
\end{equation}

For any $x \in S_K^n$, we define the geodesic projection parameter $t^\star(x) \in \operatorname*{arg\,min}_{t \in \mathbb{R}} d(x, \gamma(t))$, where the geodesic distance on $S_K^n$ is $d(x,y) = \frac{1}{\sqrt{K}}\,\arccos\!\big( K \langle x, y \rangle \big)$.
Let $x \in S_K^n$, the optimal projection parameter $t^\star(x)$ satisfies the following proposition, which gives a closed-form solution.

\textbf{Proposition 1.}
For $x \in \mathcal{S}^n_K$ with $\langle x, x_0 \rangle \neq 0$, the one-dimensional projection coordinate is given by
\begin{equation}
\label{eq_1d_s}
t_v^*(x) = \frac{1}{\sqrt{K}} \operatorname{atan2}
\left( \langle x, v \rangle, \sqrt{K}\langle x, x_0 \rangle \right).
\end{equation}

The proof and equatorial and degenerate cases can be handled similarly and are provided in Appendix \cref{appendix Sphere geodesic projections}.

\subsection{Constant-curvature sliced Gromov--Wasserstein}
We now define a sliced Gromov--Wasserstein discrepancy for distributions supported on constant-curvature spaces with possibly different curvatures.
Although the heterogeneous spaces and their metrics differ through the curvature parameter $K$, each distribution can be projected onto a one-dimensional geodesic coordinate, yielding a probability measure on the line $\mathbb{R}$.

Let $\mu \in \mathcal{P}(\mathcal{M}_{K_1}^{n_1})$ and $\nu \in \mathcal{P}(\mathcal{M}_{K_2}^{n_2})$, with $n_1 \leq n_2$. At the origins, $T_{o_{K_j}}\mathcal{M}_{K_j}^{n_j}\cong\mathbb{R}^{n_j}$ for $j=1,2$. 
Following \citep{vayer2019sliced}, let $\Delta:\mathbb{R}^{n_1}\to\mathbb{R}^{n_2}$ be the zero-padding map $\Delta(u)=(u,0_{n_2-n_1})$. For $\theta\in\mathbb{S}^{n_2-1}$, let $p_\theta^{K_j}:\mathcal{M}_{K_j}^{n_2}\to\mathbb{R}$ be the geodesic projection coordinate associated with the unit tangent vector represented by $\theta$ at the origin. 
Let $\lambda_{n_2-1}$ be the uniform probability measure on $\mathbb{S}^{n_2-1}$. 
We define
\begin{equation}
\mathrm{CCSGW}_{\Delta}(\mu, \nu)
=
\int_{v \sim \lambda_{K_1}}
\mathrm{GW}_2^2\!\left(
d_{\mathbb{R}}^2,\,
(p^{K_1}_{v})_\# \mu,\,
(p^{K_2}_{\Delta v})_\# \nu
\right)
\, d\lambda_{n_1}(v),
\label{eq:ccsgw}
\end{equation}
Here, $d_{\mathbb{R}}(s,t)=|s-t|$, and both measures are sliced using the same direction $\theta$ after dimension alignment. When $K_1=K_2=0$, this definition reduces to Euclidean SGW. When $n_1=n_2$, the zero-padding map and its induced embedding are the identity.

\subsection{Theoretical Analysis}
\label{sec:theory}
 
We next connect the sliced objective in \cref{eq:ccsgw} to the intrinsic
geometry of the original constant-curvature spaces. We show that CCSGW
controls the intrinsic GW discrepancy up to an explicit error induced by
geodesic projection. Since CCSGW averages discrepancies over
one-dimensional projections, the resulting guarantee is naturally an
average one: a cross-space map approximately preserves pairwise distances
for most pairs, rather than uniformly over all pairs.
 
Let $\mathcal{X}=(X,d_X,\mu)$ and $\mathcal{Y}=(Y,d_Y,\nu)$ be two
constant-curvature metric-measure spaces, with origins $o_X$ and $o_Y$. As in \cref{eq:ccsgw},
directions $v\sim\lambda_X$ and $w\sim\lambda_Y$ are drawn independently
and uniformly from the unit tangent spheres at the origins, and
$p_v^X, p_w^Y$ denote the one-dimensional geodesic projection
coordinates. Throughout, $\operatorname{GW}_2$ denotes the square root of
the discrepancy in \cref{eq:gw_recall} evaluated with the intrinsic geodesic
distances,
$\operatorname{GW}_2(\mathcal{X},\mathcal{Y})
=\inf_{\pi\in\Pi(\mu,\nu)}
\|d_X(x,x')-d_Y(y,y')\|_{L_2(\pi\otimes\pi)}$,
so that
$S:=\operatorname{CCSGW}(\mu,\nu)
=\mathbb{E}_{v,w}\operatorname{GW}_2^2\big((p_v^X)_\#\mu,(p_w^Y)_\#\nu\big)$.
 
We assume that the distributions are supported within a finite radius $R$
of the corresponding origins and, for a spherical component with curvature
$K>0$, that
$R<\frac{\pi}{2\sqrt{K}}$.
Under this condition the geodesic projection is unique and its signed
coordinate is an isometry onto the real line: the difference of two
coordinates equals the geodesic distance between the projected points. Without it, coordinates on a great circle can wrap
around, and the sliced objective no longer reflects the spherical geometry. For $K\le0$ geodesics do not wrap, and only bounded
support is required. Let $P_v^X(x)$ denote the projection of $x$ onto the
geodesic through $o_X$ in direction $v$, and define the average projection
displacement
\begin{equation}
    H_X
    :=
    \left(
    \mathbb{E}_{v}
    \mathbb{E}_{x\sim\mu}
    d_X\!\left(x,P_v^X(x)\right)^2
    \right)^{1/2},
\end{equation}
with $H_Y$ defined analogously using $P_w^Y$. Intuitively, $H_X$ and $H_Y$ quantify how much intrinsic geometric information may be lost by reducing the original spaces to one-dimensional geodesic slices, i.e. the information loss of slice projection.
 
\begin{theorem}
\label{thm:ccsgw_gw}
Let $S := \operatorname{CCSGW}(\mu,\nu)$. Under the above 
conditions,
\[
\operatorname{GW}_2(\mathcal{X},\mathcal{Y})
\le
2H_X + \sqrt{S} + 2H_Y.
\]
\end{theorem}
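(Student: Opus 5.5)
The plan is to view $\operatorname{GW}_2$ as a pseudometric on metric-measure spaces and apply the triangle inequality through intermediate one-dimensional spaces. The result should follow from the chain
\[
\mathcal{X}\;\to\;\mathcal{X}_v:=(\mathbb{R},|\cdot|,(p_v^X)_\#\mu)\;\to\;\mathcal{Y}_w:=(\mathbb{R},|\cdot|,(p_w^Y)_\#\nu)\;\to\;\mathcal{Y},
\]
followed by an average over $(v,w)$. For fixed $v,w$ I would use the $L_2(\pi\otimes\pi)$ form of $\operatorname{GW}_2$ and glue couplings. The coupling between $\mu$ and $(p_v^X)_\#\mu$ is the deterministic one, $(\mathrm{id},p_v^X)_\#\mu$. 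An optimal one-dimensional coupling $\pi_{v,w}$ links $(p_v^X)_\#\mu$ and $(p_w^Y)_\#\nu$. The coupling between $(p_w^Y)_\#\nu$ and $\nu$ is again deterministic. Composing them via the gluing lemma gives a coupling $\pi\in\Pi(\mu,\nu)$. Minkowski's inequality in $L_2(\pi\otimes\pi)$ then splits $d_X(x,x')-d_Y(y,y')$ into three pieces:
\[
\bigl(d_X(x,x')-|p_v^X(x)-p_v^X(x')|\bigr),\qquad \bigl(|s-s'|-|t-t'|\bigr),\qquad \bigl(|p_w^Y(y)-p_w^Y(y')|-d_Y(y,y')\bigr).
\]
The middle piece has norm $\operatorname{GW}_2((p_v^X)_\#\mu,(p_w^Y)_\#\nu)$ by optimality of $\pi_{v,w}$.

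\textbf{Projection error.} This is where the support assumption enters. Because $R<\pi/(2\sqrt K)$ in the spherical case (and by Proposition 1 together with its hyperbolic analogue in \cref{eq_1d_h}), the signed coordinate is an isometry of the geodesic $\gamma_v$ onto its image in $\mathbb{R}$. Hence $|p_v^X(x)-p_v^X(x')|=d_X(P_v^X(x),P_v^X(x'))$. The triangle inequality in $X$ gives
\[
|d_X(x,x')-d_X(P_v^X x,P_v^X x')|\le d_X(x,P_v^Xx)+d_X(x',P_v^Xx').
\]
Taking the $L_2(\mu\otimes\mu)$ norm and applying Minkowski again bounds this by $2\bigl(\mathbb{E}_\mu d_X(x,P_v^Xx)^2\bigr)^{1/2}=:2h_X(v)$. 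The same argument bounds the $Y$ term by $2h_Y(w)$. So for every $(v,w)$,
\[
\operatorname{GW}_2(\mathcal{X},\mathcal{Y})\le 2h_X(v)+\operatorname{GW}_2\bigl((p_v^X)_\#\mu,(p_w^Y)_\#\nu\bigr)+2h_Y(w).
\]

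\textbf{Averaging.} The left side does not depend on $(v,w)$, so I would take $\mathbb{E}_{v,w}$ of both sides. Jensen's inequality (concavity of the square root) then gives $\mathbb{E}_v h_X(v)\le(\mathbb{E}_v h_X(v)^2)^{1/2}=H_X$, and likewise $\mathbb{E}_w h_Y(w)\le H_Y$ and $\mathbb{E}_{v,w}\operatorname{GW}_2\le\sqrt{S}$. Combining these yields $2H_X+\sqrt S+2H_Y$.

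\textbf{Main obstacle.} I expect the hardest step to be justifying the isometry claim rigorously. This means showing that the projection is unique and measurable, and that the signed coordinate really satisfies $|p(x)-p(x')|=d(P x,P x')$ on the support. For the sphere, I would verify from the atan2 formula that points within radius $R<\pi/(2\sqrt K)$ have $\langle x,x_0\rangle>0$. Their coordinates then lie in $(-\pi/(2\sqrt K),\pi/(2\sqrt K))$, an arc of length less than $\pi/\sqrt K$, on which arc-length equals geodesic distance. A secondary point is that gluing couplings is only legitimate on Polish spaces; this is immediate here.
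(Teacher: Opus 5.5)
Your proposal is correct and follows essentially the same route as the paper. For each fixed $(v,w)$ you chain $\mathcal{X}\to\mathcal{X}_v\to\mathcal{Y}_w\to\mathcal{Y}$ with the deterministic couplings $(\mathrm{id},p_v^X)_\#\mu$ and $(\mathrm{id},p_w^Y)_\#\nu$, bound the projection distortion by the pointwise triangle inequality (using that the signed coordinate is isometric on the support), and average with Jensen. The differences are cosmetic: you prove the GW triangle inequality inline by gluing instead of citing M\'emoli's Theorem~5.1(a), and you bound $\delta_X(v)\le 2h_X(v)$ per direction before averaging instead of passing through $\Delta_X\le 2H_X$.
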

 
\cref{thm:ccsgw_gw} suggests that a small CCSGW value certifies a small intrinsic GW discrepancy whenever the projection displacements $H_X$ and $H_Y$ are small. These two terms are properties of the data and the projection family; they are explicit and measurable, but they are not reduced by minimizing $S$ alone. The proof lifts, for each pair of directions, the optimal matching between the two one-dimensional slices to a matching between the original spaces, and bounds its distortion by the triangle inequality: the intrinsic discrepancy is at most the distortion introduced by projecting $\mathcal{X}$, plus the one-dimensional GW discrepancy computed by CCSGW, plus the distortion introduced by projecting $\mathcal{Y}$. Averaging over directions yields \cref{thm:ccsgw_gw}.
 
We further translate this distribution-level guarantee into a pairwise distance-preservation result, the average analogue of a Lipschitz bound used in~\citep{li2026rethinking}. Let $\pi^\star\in\Pi(\mu,\nu)$ attain $\operatorname{GW}_2(\mathcal{X},\mathcal{Y})$, let $S_X$ be the $X$-projection of
$\operatorname{supp}\pi^\star$, and assume
\[
r
:=
\inf_{\substack{x,x'\in S_X\\x\neq x'}}
d_X(x,x')
>0,
\]
which holds, e.g., for the empirical measures used in training. For a
measurable cross-space map $g:X\rightarrow Y$, define its $L_2$ matching
error with respect to $\pi^\star$ as
\[
\rho_{\pi^\star}(g)
:=
\left(
\int d_Y(g(x),y)^2\,d\pi^\star(x,y)
\right)^{1/2}.
\]
 
\begin{theorem}
\label{thm:average_isometry}
Under the same assumptions, let
$B:=2H_X+\sqrt{S}+2H_Y+2\rho_{\pi^\star}(g)$,
\[
\left(
\int_{x\neq x'}
\left|
\frac{d_Y(g(x),g(x'))}{d_X(x,x')}-1
\right|^2
d\pi^\star(x,y)\,d\pi^\star(x',y')
\right)^{1/2}
\le
\frac{B}{r}.
\]
Consequently, for every $\epsilon\in(0,1)$, all pairs with $x\neq x'$
except a set of $\pi^\star\otimes\pi^\star$-mass at most $\epsilon$
satisfy
\[
1-\frac{B}{r\sqrt{\epsilon}}
\le
\frac{d_Y(g(x),g(x'))}{d_X(x,x')}
\le
1+\frac{B}{r\sqrt{\epsilon}}.
\]
\end{theorem}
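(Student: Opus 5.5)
The plan is to derive \cref{thm:average_isometry} from \cref{thm:ccsgw_gw} in three steps: a pairwise triangle inequality, division by the separation $r$, and Chebyshev's inequality.

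\textbf{Step 1 (additive distortion of $g$).} For $\pi^\star$-pairs $(x,y),(x',y')$, the triangle inequality in $Y$ gives
\[
\bigl|d_Y(g(x),g(x'))-d_Y(y,y')\bigr|\le d_Y(g(x),y)+d_Y(g(x'),y').
\]
Combining this with the triangle inequality for $|\cdot|$ yields
\[
\bigl|d_Y(g(x),g(x'))-d_X(x,x')\bigr|
\le \bigl|d_X(x,x')-d_Y(y,y')\bigr| + d_Y(g(x),y)+d_Y(g(x'),y').
\]
Next I take the $L_2(\pi^\star\otimes\pi^\star)$ norm and apply Minkowski. The first term has norm exactly $\operatorname{GW}_2(\mathcal{X},\mathcal{Y})$, since $\pi^\star$ is optimal. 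Each of the other two terms depends on only one coordinate of the product measure, so each has norm $\rho_{\pi^\star}(g)$. Invoking \cref{thm:ccsgw_gw} then gives
\[
\bigl\|d_Y(g(x),g(x'))-d_X(x,x')\bigr\|_{L_2(\pi^\star\otimes\pi^\star)}\le 2H_X+\sqrt S+2H_Y+2\rho_{\pi^\star}(g)=B.
\]

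\textbf{Step 2 (relative distortion).} On the set $x\neq x'$ with $(x,y),(x',y')\in\operatorname{supp}\pi^\star$, we have $x,x'\in S_X$, hence $d_X(x,x')\ge r$. Therefore
\[
\left|\frac{d_Y(g(x),g(x'))}{d_X(x,x')}-1\right|=\frac{|d_Y(g(x),g(x'))-d_X(x,x')|}{d_X(x,x')}\le \frac{|d_Y(g(x),g(x'))-d_X(x,x')|}{r}.
\]
Integrating the square of this over $\{x\neq x'\}$ is bounded by the full integral from Step 1. Taking square roots gives the bound $B/r$.

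\textbf{Step 3 (Chebyshev).} Let $F$ denote the ratio minus one, restricted to $x\neq x'$. Markov's inequality applied to $F^2$ gives
\[
(\pi^\star\otimes\pi^\star)\bigl(|F|>B/(r\sqrt\epsilon)\bigr)\le \frac{\|F\|_2^2\, r^2\epsilon}{B^2}\le\epsilon.
\]
Off this exceptional set, the two-sided ratio bound holds. If $B=0$, the $L_2$ norm vanishes, so $F=0$ almost everywhere and the claim is immediate.

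\textbf{Main obstacle.} The only delicate point is measure-theoretic rather than analytic. The bound $d_X\ge r$ is available only on $\operatorname{supp}\pi^\star\times\operatorname{supp}\pi^\star$, so the integrals must be read as over this support, which carries full $\pi^\star\otimes\pi^\star$ mass. Apart from this, and the existence of an optimal $\pi^\star$ (which the statement assumes), the argument is just Minkowski's inequality plus \cref{thm:ccsgw_gw}.
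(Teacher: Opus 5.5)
Your proposal is correct and follows the paper's proof essentially step for step. The pointwise triangle inequality plus Minkowski, using $\operatorname{dis}(\pi^\star)=\operatorname{GW}_2(\mathcal{X},\mathcal{Y})$ and \cref{thm:ccsgw_gw}, is exactly the paper's auxiliary lemma; it is followed by the same division by $r$ on the support and the same Chebyshev step, and your separate treatment of $B=0$ is a small extra care the paper omits.
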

 
\cref{thm:ccsgw_gw} and \cref{thm:average_isometry} give a direct
geometric interpretation of the CCSGW objective. A small sliced
discrepancy, together with small geodesic projection displacement and
small matching error, implies that the cross-space map $g$ is
approximately distance-preserving on most sample pairs. Thus CCSGW
enforces distribution-level geometric consistency across heterogeneous
constant-curvature components, while allowing the components to retain
different curvatures. The guarantee characterizes what a small objective
certifies and is informative in the near-isometric regime $B<r$, later in experiment sections we will showcase the effectiveness of enforcing such consistency from empirical perspective. 

\section{Experiments}

We conduct experiments to evaluate the effectiveness of CCSGW in promoting geometric consistency across heterogeneous constant-curvature spaces.
We first perform a controlled simulation to visualize the cross-curvature alignment induced by CCSGW.
Mixed-curvature models have achieved state-of-the-art performance across diverse tasks, including graph anomaly detection~\citep{grover2025curvgad}, graph node classification~\citep{groverspectro}, and multimodal learning~\citep{zhou2026eeg}.
Building on these strong backbones, we incorporate CCSGW as an additional alignment objective in the training to explicitly enforce geometric consistency across heterogeneous component spaces.

\subsection{Simulations}

We first conduct a synthetic experiment to provide an intuitive visualization of cross-curvature alignment with CCSGW. 
Since the deformation of point configurations is more directly visualized in Euclidean space, we optimize a set of points in Euclidean space to align with a fixed target distribution in hyperbolic space using CCSGW as the alignment objective. 
The Euclidean points are initialized along a line, while the target distribution forms a ring-like structure in the hyperbolic space.

\begin{figure*}[t]
    \centering
    \includegraphics[width=\textwidth]{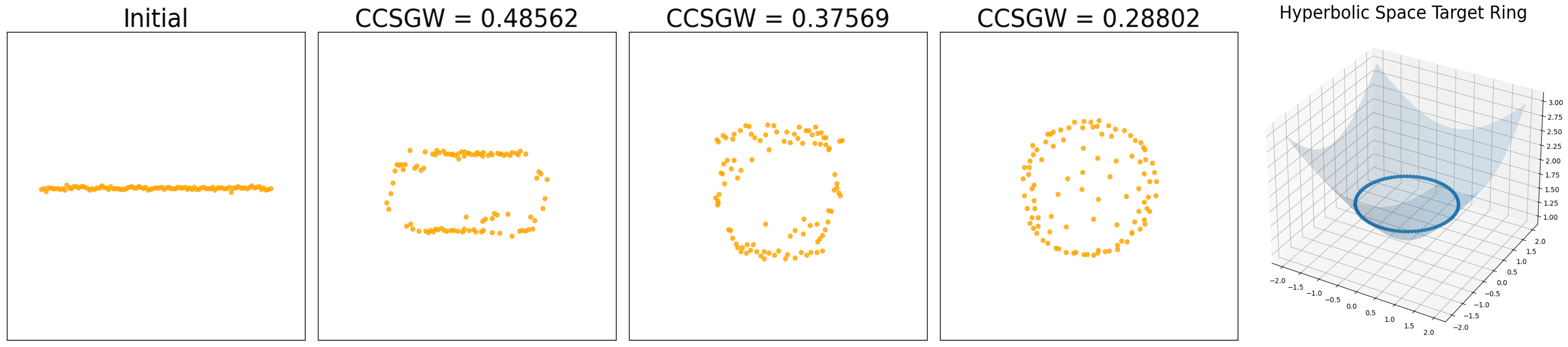}
\caption{
\textbf{Simulation of cross-curvature alignment.}
A Euclidean point cloud initialized as a line is optimized to align with a fixed ring-shaped distribution in hyperbolic space using CCSGW. 
As optimization proceeds, the Euclidean points gradually form a circular structure while the CCSGW discrepancy decreases, illustrating improved geometric consistency.
}
    \label{fig:sim}
\end{figure*}

As optimization proceeds (\cref{fig:sim}), the Euclidean point cloud gradually deforms from the initial line into a circular configuration that better matches the intrinsic relational structure of the hyperbolic target. 
Meanwhile, the CCSGW discrepancy consistently decreases, indicating that the pairwise geometric structures of the two distributions become increasingly aligned despite being supported on spaces with different curvatures. 
This simulation provides a direct illustration that minimizing CCSGW can promote distribution-level geometric consistency across heterogeneous spaces.

\subsection{Main experiments}

\begin{table*}[t]
\caption{Graph anomaly detection performance (AUROC \%($\uparrow$) $\pm$ $95\%$ confidence interval).
\colorbox{red!45}{\textbf{First}}, \colorbox{red!25}{second}, and \colorbox{red!10}{third} best are highlighted.
\textbf{$\Delta$ CCSGW} is the relative gain of CCSGW.}
\normalsize
\centering
\begin{adjustbox}{max width=0.85\textwidth}
\begin{tabular}{lccccc}
\toprule
\textbf{Model} & \textbf{Reddit} & \textbf{Weibo} & \textbf{Amazon} & \textbf{Elliptic} & \textbf{Questions} \\
\midrule

GCN 
& 65.25$\pm$\tiny{0.89} 
& 97.03$\pm$\tiny{0.57} 
& 82.48$\pm$\tiny{0.43} 
& 81.42$\pm$\tiny{1.83} 
& 70.93$\pm$\tiny{1.34} \\

GAT 
& 65.38$\pm$\tiny{0.19} 
& 95.35$\pm$\tiny{1.47} 
& 96.75$\pm$\tiny{1.06} 
& 84.93$\pm$\tiny{1.98} 
& 71.17$\pm$\tiny{1.63} \\

GraphSAGE 
& 62.86$\pm$\tiny{0.14} 
& 94.94$\pm$\tiny{1.37} 
& 89.65$\pm$\tiny{5.42} 
& 85.35$\pm$\tiny{0.72} 
& 72.15$\pm$\tiny{1.73} \\

BernNet 
& 65.82$\pm$\tiny{1.75} 
& 94.96$\pm$\tiny{1.56} 
& 96.21$\pm$\tiny{1.43} 
& 83.18$\pm$\tiny{1.49} 
& 68.82$\pm$\tiny{2.98} \\

ChebyNet 
& 64.11$\pm$\tiny{1.49} 
& 95.02$\pm$\tiny{0.93} 
& 96.59$\pm$\tiny{2.01} 
& 79.34$\pm$\tiny{3.07} 
& 69.23$\pm$\tiny{1.55} \\

\midrule

HGCN 
& 64.13$\pm$\tiny{0.98} 
& 94.23$\pm$\tiny{1.86} 
& 79.94$\pm$\tiny{0.93} 
& 78.66$\pm$\tiny{2.22} 
& 67.16$\pm$\tiny{1.74} \\

HGAT 
& 63.45$\pm$\tiny{0.23} 
& 92.24$\pm$\tiny{0.98} 
& 85.98$\pm$\tiny{3.65} 
& 85.45$\pm$\tiny{0.32} 
& 68.93$\pm$\tiny{1.27} \\

$\kappa$GCN 
& 61.77$\pm$\tiny{1.54} 
& 97.87$\pm$\tiny{1.24} 
& \cellcolor{red!10}97.34$\pm$\tiny{1.19} 
& 76.13$\pm$\tiny{2.13} 
& \cellcolor{red!10}73.15$\pm$\tiny{3.84} \\

$\mathcal{Q}$GCN 
& 66.45$\pm$\tiny{1.12} 
& 97.34$\pm$\tiny{0.34} 
& 96.71$\pm$\tiny{5.93} 
& 85.55$\pm$\tiny{2.03} 
& 71.74$\pm$\tiny{2.26} \\

\midrule

DCI 
& 66.54$\pm$\tiny{3.38} 
& 94.25$\pm$\tiny{1.73} 
& 94.63$\pm$\tiny{0.92} 
& 82.86$\pm$\tiny{1.51} 
& 69.21$\pm$\tiny{1.34} \\

PCGNN 
& 53.25$\pm$\tiny{2.19} 
& 90.27$\pm$\tiny{1.51} 
& 97.31$\pm$\tiny{0.82} 
& \cellcolor{red!10}85.89$\pm$\tiny{1.85} 
& 69.91$\pm$\tiny{1.43} \\

BWGNN 
& 65.43$\pm$\tiny{4.35} 
& 97.38$\pm$\tiny{0.93} 
& 97.05$\pm$\tiny{0.76} 
& 85.29$\pm$\tiny{1.14} 
& 71.85$\pm$\tiny{2.23} \\

DOMINANT 
& 62.32$\pm$\tiny{2.65} 
& 87.43$\pm$\tiny{2.01} 
& 76.23$\pm$\tiny{5.99} 
& 79.34$\pm$\tiny{0.94} 
& 64.92$\pm$\tiny{1.74} \\

AnomalyDAE 
& 65.75$\pm$\tiny{1.63} 
& 93.54$\pm$\tiny{1.88} 
& 91.99$\pm$\tiny{0.97} 
& 83.69$\pm$\tiny{0.29} 
& 62.45$\pm$\tiny{0.66} \\

GADNR 
& \cellcolor{red!10}67.01$\pm$\tiny{1.02} 
& 96.92$\pm$\tiny{0.43} 
& 93.46$\pm$\tiny{2.03} 
& 82.57$\pm$\tiny{0.84} 
& 70.11$\pm$\tiny{0.06} \\

ADAGAD 
& 66.45$\pm$\tiny{0.99} 
& \cellcolor{red!10}98.01$\pm$\tiny{0.01} 
& 96.05$\pm$\tiny{1.30} 
& \cellcolor{red!25}86.01$\pm$\tiny{0.45} 
& 71.06$\pm$\tiny{1.04} \\

\midrule

CurvGAD 
& \cellcolor{red!25}67.76$\pm$\tiny{1.54} 
& \cellcolor{red!25}99.04$\pm$\tiny{0.34}
& \cellcolor{red!25}97.50$\pm$\tiny{0.15} 
& 83.90$\pm$\tiny{1.25} 
& \cellcolor{red!25}73.89$\pm$\tiny{0.81} \\

CurvGAD + CCSGW
& \cellcolor{red!45}\textbf{70.55$\pm$\tiny{1.12}} 
& \cellcolor{red!45}\textbf{99.27$\pm$\tiny{0.09}} 
& \cellcolor{red!45}\textbf{97.73$\pm$\tiny{0.20}} 
& \cellcolor{red!45}\textbf{89.01$\pm$\tiny{0.46}} 
& \cellcolor{red!45}\textbf{76.40$\pm$\tiny{0.70}} \\

\textbf{$\Delta$ (CCSGW)} 
& 2.79
& 0.23 
& 0.23 
& 5.11 
& 2.51 \\

\bottomrule
\end{tabular}
\end{adjustbox}

\label{tab:curvgad_baselines}
\end{table*}

\textbf{Graph anomaly detection.}
Wegrate CCSGW into CurvGAD \citep{grover2025curvgad}, a state-of-the-art graph anomaly detector based on mixed-curvature representations, and evaluate on five GADBench datasets \citep{tang2023gadbench}: Reddit \citep{kumar2019predicting,liu2022bond}, Weibo \citep{zhao2020error}, Amazon \citep{mcauley2013amateurs}, Elliptic \citep{weber2019anti} and Questions \citep{platonov2023critical}.
CCSGW aligns the constant-curvature components of CurvGAD's product-manifold embedding.
At every epoch, CCSGW draws $256$ random nodes and, for each constant-curvature component, projects their embeddings along random geodesic directions (number of directions $L=32$) under the current curvature, which is kept fixed within this term.
The resulting slices are standardized and averaged over directions and component pairs.
This averaged discrepancy is added to the CurvGAD objective with the alignment weight $\lambda=10$, shared across all datasets.
Following the supervised protocol, we report test AUROC (mean $\pm$ 95\% CI) over ten splits.
We train the model with and without CCSGW in the training objective using all hyperparameters in \citet{grover2025curvgad}.
The results are summarized in \cref{tab:curvgad_baselines}, and the remaining baselines are adopted from \citet{grover2025curvgad}, with further details provided in \cref{app_exp_anomaly}.

\begin{table*}[t]
\caption{Node classification performance (weighted F1-score \%($\uparrow$) $\pm$ $95\%$ confidence interval).
\colorbox{red!45}{\textbf{First}}, \colorbox{red!25}{second}, and \colorbox{red!10}{third} best are highlighted.
The last row reports the gain of CCSGW.}
\centering
\normalsize
\begin{adjustbox}{max width=0.85\textwidth}
\begin{tabular}{@{}lccccc@{}}
\toprule
 & \multicolumn{3}{c}{\textbf{Homophilic}} & \multicolumn{2}{c}{\textbf{Heterophilic}} \\
\cmidrule(lr){2-4}\cmidrule(lr){5-6}
\textbf{Method} & {Cora} & {CiteSeer} & {PubMed} & {Chameleon} & {Actor} \\
\midrule

GCN
& 75.21\scriptsize{$\pm$0.28}
& 67.30\scriptsize{$\pm$1.05}
& 83.75\scriptsize{$\pm$0.07}
& 61.16\scriptsize{$\pm$0.23}
& 31.12\scriptsize{$\pm$0.96} \\

GAT
& 76.70\scriptsize{$\pm$0.13}
& 66.23\scriptsize{$\pm$0.85}
& 82.83\scriptsize{$\pm$0.22}
& 63.10\scriptsize{$\pm$0.77}
& 32.65\scriptsize{$\pm$0.23} \\

SAGE
& 71.88\scriptsize{$\pm$0.91}
& 70.01\scriptsize{$\pm$0.64}
& 81.09\scriptsize{$\pm$0.13}
& 59.99\scriptsize{$\pm$0.89}
& 36.73\scriptsize{$\pm$0.01} \\

\midrule

HGCN
& 78.50\scriptsize{$\pm$0.14}
& 69.55\scriptsize{$\pm$0.39}
& 83.72\scriptsize{$\pm$0.21}
& 60.18\scriptsize{$\pm$0.57}
& 35.89\scriptsize{$\pm$0.29} \\

HGAT
& 77.12\scriptsize{$\pm$0.01}
& 70.12\scriptsize{$\pm$0.92}
& 84.02\scriptsize{$\pm$0.19}
& 62.43\scriptsize{$\pm$0.59}
& 35.12\scriptsize{$\pm$0.27} \\

$\kappa$GCN
& 78.71\scriptsize{$\pm$1.37}
& 68.14\scriptsize{$\pm$0.34}
& 85.18\scriptsize{$\pm$0.52}
& 62.12\scriptsize{$\pm$0.49}
& 34.57\scriptsize{$\pm$0.26} \\

$\mathcal{Q}$GCN
& 79.64\scriptsize{$\pm$0.38}
& 71.15\scriptsize{$\pm$1.11}
& 84.76\scriptsize{$\pm$0.13}
& 61.83\scriptsize{$\pm$1.01}
& 32.24\scriptsize{$\pm$0.65} \\

SelfMGNN
& 80.19\scriptsize{$\pm$0.60}
& 70.91\scriptsize{$\pm$0.38}
& 82.81\scriptsize{$\pm$0.34}
& 64.97\scriptsize{$\pm$0.54}
& \colorbox{red!10}{38.99\scriptsize{$\pm$0.23}} \\

\midrule

ChebyNet
& 71.09\scriptsize{$\pm$0.91}
& 66.67\scriptsize{$\pm$0.38}
& 83.83\scriptsize{$\pm$0.42}
& 59.96\scriptsize{$\pm$1.51}
& 38.02\scriptsize{$\pm$0.01} \\

BernNet
& 73.34\scriptsize{$\pm$0.53}
& 62.12\scriptsize{$\pm$2.09}
& 82.15\scriptsize{$\pm$0.13}
& 62.03\scriptsize{$\pm$0.12}
& 33.55\scriptsize{$\pm$0.24} \\

GPRGNN
& 79.49\scriptsize{$\pm$0.31}
& 67.61\scriptsize{$\pm$0.38}
& 84.07\scriptsize{$\pm$0.09}
& 65.09\scriptsize{$\pm$0.43}
& 37.43\scriptsize{$\pm$1.09} \\

FiGURe
& 80.01\scriptsize{$\pm$0.09}
& 71.26\scriptsize{$\pm$0.41}
& 83.89\scriptsize{$\pm$0.11}
& \colorbox{red!10}{67.18\scriptsize{$\pm$0.02}}
& 38.31\scriptsize{$\pm$0.36} \\

\midrule

H2GCN
& 82.70\scriptsize{$\pm$0.90}
& 71.10\scriptsize{$\pm$0.80}
& 84.60\scriptsize{$\pm$0.50}
& 60.10\scriptsize{$\pm$2.20}
& 35.90\scriptsize{$\pm$1.20} \\

FAGCN
& 82.50\scriptsize{$\pm$0.50}
& 71.30\scriptsize{$\pm$0.60}
& 84.30\scriptsize{$\pm$0.40}
& 66.90\scriptsize{$\pm$1.80}
& 35.80\scriptsize{$\pm$1.10} \\

CurvDrop
& 82.50\scriptsize{$\pm$0.70}
& 72.80\scriptsize{$\pm$0.60}
& 85.20\scriptsize{$\pm$0.50}
& \colorbox{red!25}{67.30\scriptsize{$\pm$1.40}}
& \colorbox{red!25}{39.50\scriptsize{$\pm$1.00}} \\

ARGNN
& \colorbox{red!25}{86.65\scriptsize{$\pm$1.00}}
& \colorbox{red!45}{\textbf{76.29\scriptsize{$\pm$0.89}}}
& \colorbox{red!10}{90.15\scriptsize{$\pm$0.25}}
& 65.58\scriptsize{$\pm$2.05}
& 38.82\scriptsize{$\pm$0.92} \\

\midrule

CUSP
& \colorbox{red!10}{85.64\scriptsize{$\pm$1.48}}
& \colorbox{red!10}{74.10\scriptsize{$\pm$0.88}}
& \colorbox{red!25}{90.34\scriptsize{$\pm$0.21}}
& 66.48\scriptsize{$\pm$2.21}
& 38.90\scriptsize{$\pm$0.93} \\

CUSP\,+\,CCSGW
& \colorbox{red!45}{\textbf{86.66\scriptsize{$\pm$0.93}}}
& \colorbox{red!25}{74.24\scriptsize{$\pm$1.23}}
& \colorbox{red!45}{\textbf{90.39\scriptsize{$\pm$0.18}}}
& \colorbox{red!45}{\textbf{67.82\scriptsize{$\pm$1.92}}}
& \colorbox{red!45}{\textbf{39.83\scriptsize{$\pm$0.70}}} \\

\midrule

\textbf{$\Delta$ (CCSGW)}
& +1.02
& +0.14
& +0.06
& +1.34
& +0.93 \\

\bottomrule
\end{tabular}
\end{adjustbox}
\label{tab:results_node}
\end{table*}

\textbf{Graph node classification.}
We integrate CCSGW into CUSP~\citep{grover2025spectro}, a state-of-the-art spectral graph neural network on mixed-curvature product manifolds, and evaluate the resulting model on three homophilic datasets, Cora, CiteSeer, and PubMed~\citep{sen2008collective}, and two heterophilic datasets, Chameleon~\citep{pei2020geom} and Actor~\citep{tang2009social}.
As in CurvGAD, CCSGW aligns the constant-curvature components of CUSP's product-manifold node embeddings.
Following CUSP \citep{grover2025spectro}, we report the mean and 95\% confidence interval of test weighted F1 at the best validation epoch over ten random 60/20/20 splits.
We train the model with and without CCSGW in the training objective using all hyperparameters in \citet{grover2025spectro}.
The results are summarized in \cref{tab:results_node}, and the remaining baselines are adopted from \citet{grover2025spectro}, with further details provided in \cref{app_exp_node}.

\textbf{EEG-based multimodal learning.}
We integrate CCSGW into EEG-MoCE \citep{zhou2026eeg}, a state-of-the-art multimodal learning framework based on hyperbolic mixture-of-curvature experts, and evaluate the resulting model on the emotion recognition task \citep{lee2024eav} and two cognitive tasks \citep{shin2018simultaneous}.
For CCSGW alignment, we select one modality-specific hyperbolic manifold as the reference and align the embedding distribution of each remaining modality with that of the reference modality.
Following \citet{zhou2026eeg}, we perform subject-wise 10-fold cross-validation on EAV ($n=42$ participants) and Cognitive ($n=26$ participants), and report test balanced accuracy averaged over folds.
We train the model with and without CCSGW in the training objective using all hyperparameters in \citet{zhou2026eeg}.
The results are summarized in \cref{tab:multimodal_results}, and the remaining baselines are adopted from \citet{zhou2026eeg}, with details in \cref{app_exp_eeg}.
\begin{table}[t]
\centering
\caption{
Performance comparison on the emotion recognition and the
Cognitive tasks. We report balanced accuracy.
EAV uses cross-subject 10-fold cross-validation, while the
Cognitive tasks use cross-subject 10-fold cross-validation.
\colorbox{red!45}{\textbf{First}}, \colorbox{red!25}{second}, and \colorbox{red!10}{third} best are highlighted.
}
\label{tab:multimodal_results}
\vspace{4pt}

\normalsize
\setlength{\tabcolsep}{5pt}
\setlength{\fboxsep}{1.5pt}

\begin{adjustbox}{max width=\textwidth}
\begin{tabular}{lc@{\hspace{1.8em}}lcc}
\toprule
\multicolumn{2}{c}{\textbf{Emotion}} &
\multicolumn{3}{c}{\textbf{Cognitive}} \\
\cmidrule(lr){1-2}\cmidrule(lr){3-5}
Method & Emotion (\%) &
Method & N-back & Word Generation \\
\midrule

AGF-IB
& 54.59\scriptsize{$\pm$12.45}
& CTMWA
& 49.86\scriptsize{$\pm$11.72}
& 59.87\scriptsize{$\pm$8.15} \\

CMERC
& 58.57\scriptsize{$\pm$11.95}
& EFDFNet
& 49.15\scriptsize{$\pm$12.85}
& 58.85\scriptsize{$\pm$7.63} \\

CTMWA
& 52.86\scriptsize{$\pm$5.63}
& EF-Net
& \colorbox{red!10}{54.41\scriptsize{$\pm$14.03}}
& 64.53\scriptsize{$\pm$11.08} \\

GA2MIF
& 61.10\scriptsize{$\pm$12.51}
& LMF
& 52.71\scriptsize{$\pm$11.77}
& 59.74\scriptsize{$\pm$6.18} \\

HEEGNet
& \colorbox{red!10}{61.74\scriptsize{$\pm$8.33}}
& MMML
& 45.32\scriptsize{$\pm$9.37}
& 56.35\scriptsize{$\pm$8.22} \\

Hyper-MML
& 60.76\scriptsize{$\pm$12.15}
& STA-Net
& 46.63\scriptsize{$\pm$11.67}
& 62.50\scriptsize{$\pm$8.72} \\

LMF
& 57.29\scriptsize{$\pm$12.37}
& ST2A
& 51.28\scriptsize{$\pm$11.74}
& \colorbox{red!10}{67.27\scriptsize{$\pm$10.45}} \\

MMML
& 59.60\scriptsize{$\pm$11.83}
& TSMMF
& 53.84\scriptsize{$\pm$13.62}
& 63.82\scriptsize{$\pm$9.47} \\

\midrule

EEGMOCE
& \colorbox{red!25}{76.45\scriptsize{$\pm$9.82}}
& EEGMOCE
& \colorbox{red!25}{60.26\scriptsize{$\pm$17.29}}
& \colorbox{red!25}{71.15\scriptsize{$\pm$11.90}} \\

EEGMOCE + CCSGW
& \colorbox{red!45}{\textbf{77.14\scriptsize{$\pm$9.72}}}
& EEGMOCE + CCSGW
& \colorbox{red!45}{\textbf{61.68\scriptsize{$\pm$17.24}}}
& \colorbox{red!45}{\textbf{72.76\scriptsize{$\pm$13.48}}} \\

\textbf{$\Delta$ (CCSGW)}
& +0.69
& \textbf{$\Delta$ (CCSGW)}
& +1.42
& +1.61 \\

\bottomrule
\end{tabular}
\end{adjustbox}
\end{table}

\textbf{Results.}
CCSGW consistently improves the mean performance of all three backbone models across the evaluated datasets and tasks.
For graph anomaly detection (\cref{tab:curvgad_baselines}), it improves CurvGAD by up to $5.11$ percentage points in AUROC on Elliptic.
For graph learning (\cref{tab:results_node}), CUSP improves weighted F1 by $1.02$ and $1.34$ percentage points on Cora and Chameleon, respectively.
For EEG-based multimodal learning (\cref{tab:multimodal_results}), CCSGW increases balanced accuracy by $0.69$, $1.42$, and $1.61$ percentage points on emotion recognition, N-back, and word generation, respectively.
These consistent improvements support the effectiveness of CCSGW as a distribution-level alignment regularizer for heterogeneous cross-curvature representations.
Moreover, the gains obtained while retaining the backbone architectures and manifold configurations suggest that explicitly aligning component distributions complements the representational flexibility of mixed-curvature models.
Specifically, CCSGW encourages consistency in the relational structure of embedding distributions while allowing each component to retain its own curvature. 
However, the varying improvements across datasets also suggest that the benefit of such alignment may depend on the quality of the sampled geodesic projections, which may not fully preserve the geometry of the original high-dimensional manifolds.
In the appendix, we further show that these gains are robust to the alignment weight, the number of projection directions and the number of sampled nodes (\cref{app_sec: sensitivity}).
Additionally, using CUSP dataset as an example, we show that CCSGW outperforms SGW and matches GW in accuracy (\cref{app_sec: align_compare}), and its cost is the same as that of SGW (\cref{app_sec: cost}).

\section{Discussion}

In this work, we proposed CCSGW, a geometry-aware divergence for aligning distributions across heterogeneous constant-curvature spaces. 
We first introduce geodesic-based one-dimensional projections for spherical spaces and extending sliced GW to mixed-curvature settings. 
Our theoretical analysis shows that minimizing CCSGW controls the intrinsic GW discrepancy up to projection error, providing a principled connection between the sliced objective and distribution-level geometric consistency. 
Empirically, incorporating CCSGW into mixed-curvature models consistently improves performance across graph anomaly detection, node classification, and multimodal learning, suggesting that cross-curvature consistency can serve as a useful inductive bias across different architectures and tasks. 
The limitation of CCSGW is that its effectiveness depends on the quality of the sampled geodesic projections, which may not fully preserve the geometry of the original high-dimensional manifolds.
Future work could therefore explore subspace alignment for more robust cross-curvature matching by preserving richer geometric information beyond one-dimensional projections.
Extending CCSGW beyond constant-curvature product spaces to more general heterogeneous manifolds is another promising direction, potentially providing a broader framework for geometry-aware representation alignment.

\clearpage
\clearpage
\bibliography{example_paper}
\bibliographystyle{iclr2027_conference}

\clearpage
\appendix

\section{Sphere geodesic projections}
\label{appendix Sphere geodesic projections}

Geodesic Projection onto a Fixed Geodesic on the Sphere (Radius Model)

1. Geometric Setting

Let $K > 0$ and consider the sphere (radius model)
$$S_K^n = \{ x \in \mathbb{R}^{n+1} : \|x\|^2 = 1 / K \}.$$

Fix:
- a reference point $x_0 \in S_K^n$,
- a unit tangent vector $v \in T_{x_0} S_K^n$ with $\|v\| = 1$.

The geodesic starting from $x_0$ in direction $v$ is given by the exponential map
$$\gamma(t) = \mathrm{Exp}_{x_0}(t v) = \cos(\sqrt{K}\, t)\, x_0 + \frac{\sin(\sqrt{K}\, t)}{\sqrt{K}}\, v,\qquad t \in \mathbb{R}.$$

For any $x \in S_K^n$, define the geodesic projection parameter
$$t^\star(x) \in \operatorname*{arg\,min}_{t \in \mathbb{R}} d(x, \gamma(t)).$$
where the geodesic distance on $S_K^n$ is
$$d(x,y) = \frac{1}{\sqrt{K}}\,\arccos\!\big( K \langle x, y \rangle \big).$$

2. Theorem (Closed-form Geodesic Projection Parameter)

Let $x \in S_K^n$. The optimal projection parameter $t^\star(x)$ satisfies the following case-wise characterization.

Case 1: Generic case $\langle x, x_0 \rangle \neq 0$

If $\langle x, x_0 \rangle \neq 0$,
then any optimal parameter $t^\star(x)$ satisfies
$$\tan(\sqrt{K}\, t^\star(x)) = \frac{\langle x, v \rangle}{\sqrt{K}\, \langle x, x_0 \rangle}.$$

A consistent choice of branch is
$$t^\star(x) = \frac{1}{\sqrt{K}}\,\mathrm{atan2}\big(\langle x, v \rangle,\ \sqrt{K}\,\langle x, x_0 \rangle\big).$$

---

Case 2: Equatorial case $\langle x, x_0 \rangle = 0$ and $\langle x, v \rangle \neq 0$

If $\langle x, x_0 \rangle = 0$ and $\langle x, v \rangle \neq 0$,
then an optimal projection parameter is
$$t^\star(x) = \frac{1}{\sqrt{K}}\,\arcsin\big(\mathrm{sign}(\langle x, v \rangle)\big).$$
$$t^\star(x) = \frac{1}{\sqrt{K}}\begin{cases}\frac{\pi}{2}, & \langle x, v \rangle > 0, \\ -\frac{\pi}{2}, & \langle x, v \rangle < 0.\end{cases}$$

Case 3: Degenerate case $\langle x, x_0 \rangle = 0$ and $\langle x, v \rangle = 0$

If $\langle x, x_0 \rangle = 0$ and $\langle x, v \rangle = 0$,
then
$$\langle x, \gamma(t) \rangle = 0,\qquad \text{for all } t \in \mathbb{R}.$$
and the projection is not unique: every $t \in \mathbb{R}$ is optimal.

3.Proof

Step 1: Rewrite the objective using the explicit distance

By definition,
$$t^\star(x) \in \operatorname*{arg\,min}_{t \in \mathbb{R}} d(x,\gamma(t)).$$

Using $$d(x,y) = \frac{1}{\sqrt{K}}\,\arccos\!\big(K\langle x,y\rangle\big),$$ we have
$$d(x,\gamma(t)) = \frac{1}{\sqrt{K}}\,\arccos\!\big(K\langle x,\gamma(t)\rangle\big).$$

Since the factor $1/\sqrt{K}$ is positive and does not depend on $t$, we may drop it inside $\operatorname*{arg\,min}$:
$$t^\star(x) \in \operatorname*{arg\,min}_{t \in \mathbb{R}} \arccos\!\big(K\langle x,\gamma(t)\rangle\big).$$

Step 2: Use monotonicity of $\arccos$

On $[-1,1]$, $\arccos(\cdot)$ is strictly decreasing. Therefore, minimizing $\arccos(\cdot)$ is equivalent to maximizing its argument. Hence,
$$t^\star(x) \in \operatorname*{arg\,max}_{t \in \mathbb{R}} K\langle x,\gamma(t)\rangle.$$

Since $K>0$ is constant, maximizing $K\langle x,\gamma(t)\rangle$ is equivalent to maximizing $\langle x,\gamma(t)\rangle$:
$$t^\star(x) \in \operatorname*{arg\,max}_{t \in \mathbb{R}} \langle x,\gamma(t)\rangle.$$

Step 3: Expand the objective $\langle x,\gamma(t)\rangle$

Recall
$$\gamma(t) = \cos(\sqrt{K}\,t)\,x_0 + \frac{\sin(\sqrt{K}\,t)}{\sqrt{K}}\,v.$$

Then, by bilinearity of the Euclidean inner product,
$$\langle x,\gamma(t)\rangle = \left\langle x,\cos(\sqrt{K}\,t)\,x_0 + \frac{\sin(\sqrt{K}\,t)}{\sqrt{K}}\,v \right\rangle.$$

Pulling out scalar coefficients,
$$\langle x,\gamma(t)\rangle = \cos(\sqrt{K}\,t)\,\langle x,x_0\rangle + \frac{\sin(\sqrt{K}\,t)}{\sqrt{K}}\,\langle x,v\rangle.$$

Step 4: Differentiate with respect to $t$

Differentiate term by term. Using
$$\frac{d}{dt}\cos(\sqrt{K}\,t) = -\sqrt{K}\,\sin(\sqrt{K}\,t),$$
and
$$\frac{d}{dt}\sin(\sqrt{K}\,t) = \sqrt{K}\,\cos(\sqrt{K}\,t),$$
we obtain
$$\frac{d}{dt}\langle x,\gamma(t)\rangle = -\sqrt{K}\,\sin(\sqrt{K}\,t)\,\langle x,x_0\rangle + \cos(\sqrt{K}\,t)\,\langle x,v\rangle.$$

At any interior maximizer $t^\star(x)$ where the derivative exists, we must have
$$-\sqrt{K}\,\sin(\sqrt{K}\,t^\star(x))\,\langle x,x_0\rangle + \cos(\sqrt{K}\,t^\star(x))\,\langle x,v\rangle = 0.$$

Equivalently,
$$\cos(\sqrt{K}\,t^\star(x))\,\langle x,v\rangle = \sqrt{K}\,\sin(\sqrt{K}\,t^\star(x))\,\langle x,x_0\rangle.$$

---

Step 5: Solve the optimality condition (case-wise)

Case 1: $\langle x,x_0\rangle \neq 0$

Assume $\langle x,x_0\rangle \neq 0$. If $\cos(\sqrt{K}\,t^\star(x)) \neq 0$, divide both sides by $\sqrt{K}\,\langle x,x_0\rangle\,\cos(\sqrt{K}\,t^\star(x))$ to get
$$\tan(\sqrt{K}\,t^\star(x)) = \frac{\langle x,v\rangle}{\sqrt{K}\,\langle x,x_0\rangle}.$$

A consistent branch choice that also handles the correct quadrant is
$$t^\star(x) = \frac{1}{\sqrt{K}}\,\mathrm{atan2}\big(\langle x,v\rangle,\ \sqrt{K}\,\langle x,x_0\rangle\big).$$

Case 2: $\langle x,x_0\rangle = 0$ and $\langle x,v\rangle \neq 0$

Assume $\langle x,x_0\rangle = 0$ and $\langle x,v\rangle \neq 0$. The optimality condition becomes
$$\cos(\sqrt{K}\,t^\star(x))\,\langle x,v\rangle = 0,$$
hence
$$\cos(\sqrt{K}\,t^\star(x)) = 0.$$

Therefore $\sqrt{K}\,t^\star(x) = \frac{\pi}{2} + m\pi$ for some $m \in \mathbb{Z}$. To choose the maximizer, note that when $\langle x,x_0\rangle=0$ the objective reduces to
$$\langle x,\gamma(t)\rangle = \frac{\sin(\sqrt{K}\,t)}{\sqrt{K}}\,\langle x,v\rangle.$$

This is maximized by choosing $\sin(\sqrt{K}\,t^\star(x)) = \mathrm{sign}(\langle x,v\rangle)$, hence one convenient representative is
$$t^\star(x) = \frac{1}{\sqrt{K}}\,\arcsin\big(\mathrm{sign}(\langle x,v\rangle)\big).$$

Case 3: $\langle x,x_0\rangle = 0$ and $\langle x,v\rangle = 0$

Assume $\langle x,x_0\rangle = 0$ and $\langle x,v\rangle = 0$. Then for all $t$,
$$\langle x,\gamma(t)\rangle = 0,$$
so every $t \in \mathbb{R}$ is optimal (the projection is not unique).

This completes the proof.

4. Remarks
The generic case holds for almost every $x \in S_K^n$ except a measure-zero set.

\clearpage

\section{Theoretical analysis proofs}
\label{app_proofs}
 
\subsection{Setting}
\label{app:notation}
 
\begin{assumption}\label{ass:geom}
Each distribution is supported in the closed ball $\bar B(o,R)$ around the
origin $o$ of its component, with $R<\infty$. For a spherical component with
curvature $K>0$, additionally
\begin{equation}\label{eq:cap}
    R<\frac{\pi}{2\sqrt K}.
\end{equation}
\end{assumption}
 
We write $d_X,d_Y$ for the geodesic distances of the ambient
constant-curvature spaces. For
$\pi\in\Pi(\mu,\nu)$, the distortion is defined as:
\begin{equation}\label{eq:dis}
    \operatorname{dis}(\pi):=\big\|d_X(x,x')-d_Y(y,y')\big\|_{L_2(\pi\otimes\pi)},
\end{equation}
so that $\operatorname{GW}_2(\mathcal{X},\mathcal{Y})
=\inf_{\pi\in\Pi(\mu,\nu)}\operatorname{dis}(\pi)$, attained by $\pi^\star$
on compact supports \citep{memoli2011gromov}.
 
For a unit tangent direction $v$ at $o_X$, let $\gamma_v$ be the unit-speed
geodesic with $\gamma_v(0)=o_X$ and $\dot\gamma_v(0)=v$; $P_v^X(x)$ is the
point of $\gamma_v$ closest to $x$, and $p_v^X(x)$ is defined by
$P_v^X(x)=\gamma_v(p_v^X(x))$. $P_w^Y,p_w^Y$ follows similar definition analogously. With
$\mu_v:=(p_v^X)_\#\mu$ and $\nu_w:=(p_w^Y)_\#\nu$, let
$Q(v,w):=\operatorname{GW}_2^2(\mu_v,\nu_w)$, so that
$S=\mathbb{E}_{v,w}Q(v,w)$ with $v\sim\lambda_X$, $w\sim\lambda_Y$
independent. Finally, define:
\begin{align}
    d_v^X(x,x')&:=\big|p_v^X(x)-p_v^X(x')\big|,
    &h_v^X(x)&:=d_X\big(x,P_v^X(x)\big),\label{eq:H}\\
    \delta_X(v)&:=\big\|d_X-d_v^X\big\|_{L_2(\mu\otimes\mu)},
    &\Delta_X&:=\big(\mathbb{E}_v\,\delta_X(v)^2\big)^{1/2},\label{eq:delta}
\end{align}
with $H_X=(\mathbb{E}_v\mathbb{E}_{x\sim\mu}h_v^X(x)^2)^{1/2}$ as in
\cref{sec:theory}, and $d_w^Y,h_w^Y,\delta_Y,\Delta_Y$ analogously.
 
\subsection{Proof of \texorpdfstring{\cref{thm:ccsgw_gw}}{Theorem 1}}
\label{app:proof_gw}
 
\begin{lemma}\label{lem:coord}
Under \cref{ass:geom}, for every direction $v$ and all
$x,x'\in\bar B(o,R)$, the projection $P_v(x)$ is unique,
$|p_v(x)|\le d_K(x,o)\le R$, and:
\begin{equation}\label{eq:isometric}
    \big|p_v(x)-p_v(x')\big|=d_K\big(P_v(x),P_v(x')\big).
\end{equation}
\end{lemma}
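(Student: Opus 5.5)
The plan is to treat the three curvature regimes separately, using in each case the explicit form of the unit-speed geodesic $\gamma_v$ through $o$. In every case $\gamma_v$ is a totally geodesic one-dimensional submanifold, so $\gamma_v$ restricted to $\mathbb{R}$ (or to an arc) is an isometry onto its image. The identity \eqref{eq:isometric} then reduces to showing two things: the projected points lie on a portion of $\gamma_v$ where the parametrization is minimizing, and $p_v(x)$ is well defined.

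For $K=0$, $\gamma_v(t)=tv$ and $P_v(x)=\langle x,v\rangle v$. Uniqueness is immediate from strict convexity of $t\mapsto\|x-tv\|^2$, and $|p_v(x)|=|\langle x,v\rangle|\le\|x\|$ by Cauchy--Schwarz. The isometry is simply $\|(t-t')v\|=|t-t'|$.

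For $K<0$, I would use that $t\mapsto d(x,\gamma_v(t))$ is strictly convex and coercive in a Hadamard manifold. This follows from the CAT(0) property, or more concretely from $\cosh(\sqrt{-K}\,d(x,\gamma_v(t)))=-K\langle x,\gamma_v(t)\rangle_L=a\cosh(\sqrt{-K}t)-b\sinh(\sqrt{-K}t)$ with $a>|b|$. That function has a unique minimizer, namely $\tanh(\sqrt{-K}\,t^\star)=b/a$. To get $|p_v(x)|\le d(x,o)$, I would note that the projection onto a closed geodesically convex set is $1$-Lipschitz in CAT(0) spaces, and that $o$ lies on $\gamma_v$. Hence $d(P_v(x),o)=d(P_v(x),P_v(o))\le d(x,o)$. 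Since $\gamma_v$ is a minimizing line, $d(\gamma_v(t),\gamma_v(t'))=|t-t'|$, which gives \eqref{eq:isometric}.

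For $K>0$, I would rely on Proposition~1. Assumption~\eqref{eq:cap} gives $\sqrt K\,d(x,o)<\pi/2$, so $K\langle x,x_0\rangle=\cos(\sqrt K d(x,o))>0$. The generic case therefore always applies, and the atan2 formula places $\sqrt K\,p_v(x)\in(-\pi/2,\pi/2)$. Uniqueness follows because $\langle x,\gamma_v(t)\rangle=A\cos(\sqrt K t-\phi)$ with $A>0$ has a unique maximizer modulo $2\pi/\sqrt K$, and the restriction to the principal branch selects it. The bound $|p_v(x)|\le d(x,o)$ comes from the spherical Pythagorean law for the right triangle $o,P_v(x),x$: $\cos(\sqrt K d(x,o))=\cos(\sqrt K p_v(x))\cos(\sqrt K h_v(x))$. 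Both factors lie in $(0,1]$, so $\cos(\sqrt K|p_v(x)|)\ge\cos(\sqrt K d(x,o))$, and monotonicity of $\cos$ on $[0,\pi/2)$ gives the claim. Finally, two points $\gamma_v(t),\gamma_v(t')$ with $|t|,|t'|<\pi/(2\sqrt K)$ satisfy $\sqrt K|t-t'|<\pi$. On a great circle the geodesic distance is $\min(|t-t'|,2\pi/\sqrt K-|t-t'|)=|t-t'|$ in that range, which is \eqref{eq:isometric}.

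The main obstacle is the spherical case, specifically ensuring no wrap-around. We need both that the principal branch really is the global minimizer and that projected coordinates stay within a half-circle. I would handle both cleanly via the Pythagorean identity and the positivity $\langle x,x_0\rangle>0$ supplied by \eqref{eq:cap}. A secondary point is making the CAT(0) Lipschitz-projection argument explicit in the hyperboloid model. If that proves messy, I would instead use the hyperbolic Pythagorean law $\cosh d(x,o)=\cosh p\cosh h$, which parallels the spherical argument.
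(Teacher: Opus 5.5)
Your proof is correct and follows essentially the same route as the paper. For $K\le 0$ the paper likewise rests on the Cartan--Hadamard structure (unique projection, isometric signed coordinate, and a $1$-Lipschitz $p_v$ with $p_v(o)=0$). For $K>0$ it uses $\langle x,o\rangle>0$ from the cap condition for uniqueness, and $2R<\pi/\sqrt K$ to exclude wrap-around. It also uses the same comparison $\cos(\sqrt K\,|p_v(x)|)\ge\cos(\sqrt K\,d(x,o))$: your Pythagorean factor $\cos(\sqrt K\,h_v(x))$ equals $\sqrt K\,\|\operatorname{proj}_E x\|\le 1$ in the paper's great-circle formula $P_v(x)=\operatorname{proj}_E x/(\sqrt K\|\operatorname{proj}_E x\|)$, $E=\operatorname{span}\{o,v\}$.

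One cosmetic point: with the paper's convention $\|x\|_L^2=1/K$, the hyperbolic identity reads $\cosh(\sqrt{-K}\,d(x,\gamma_v(t)))=K\langle x,\gamma_v(t)\rangle_L$ rather than $-K\langle x,\gamma_v(t)\rangle_L$.
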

 
\begin{proof}
\emph{Case $K\le0$.} Euclidean and hyperbolic spaces are Cartan--Hadamard
manifolds, for which the projection onto a geodesic is unique, the signed
coordinate is an isometry onto $\mathbb{R}$, which is \cref{eq:isometric},
and $p_v$ is $1$-Lipschitz \citep[Sec.~3.2]{bonet2025sliced}. Since $p_v(o)=0$, this gives
$|p_v(x)|\le d_K(x,o)\le R$.
 
\emph{Case $K>0$.} Let \(k=\sqrt K\) and \(E=\operatorname{span}\{o,v\}\). By the great-circle projection formula of Bonet et al. (2023, Lemma 1), rescaled to radius \(1/k\),
\[
P_v(x)=\frac{\operatorname{proj}_E x}
{k\|\operatorname{proj}_E x\|}.
\]
Since \(d_K(o,x)\le R<\pi/(2k)\), we have \(\langle o,x\rangle>0\), so this projection is well defined and unique. Moreover, because \(\|\operatorname{proj}_E x\|\le\|x\|=1/k\),
\[
\cos\!\bigl(kd_K(o,P_v(x))\bigr)
=\frac{k\langle o,x\rangle}{\|\operatorname{proj}_E x\|}
\ge k^2\langle o,x\rangle
=\cos\!\bigl(kd_K(o,x)\bigr).
\]
Thus \(|p_v(x)|=d_K(o,P_v(x))\le d_K(o,x)\le R\). Consequently, any two projected coordinates differ by at most \(2R<\pi/k\), and hence their absolute difference equals the shortest great-circle distance between the projected points, proving Eq. (16).
\end{proof}
 
\begin{lemma}\label{lem:proj}
Under \cref{ass:geom}, $\Delta_X\le 2H_X$ and $\Delta_Y\le 2H_Y$.
\end{lemma}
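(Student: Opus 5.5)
Fix a direction $v$ and a pair $x,x'\in\operatorname{supp}\mu\subset\bar B(o,R)$. The plan is a pointwise triangle inequality, squared and integrated against $\mu\otimes\mu$, followed by averaging over $v$.

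\emph{Pointwise bound.} By \cref{lem:coord}, the projection $P_v$ is unique and the coordinate map is isometric, so
\[
d_v^X(x,x')=\bigl|p_v^X(x)-p_v^X(x')\bigr|=d_X\bigl(P_v^X(x),P_v^X(x')\bigr).
\]
Since $d_X$ is a metric on the ambient space, the triangle inequality (used twice, once for each endpoint) gives
\[
\bigl|d_X(x,x')-d_X\bigl(P_v^X(x),P_v^X(x')\bigr)\bigr|
\le d_X\bigl(x,P_v^X(x)\bigr)+d_X\bigl(x',P_v^X(x')\bigr)
= h_v^X(x)+h_v^X(x').
\]
Hence $|d_X(x,x')-d_v^X(x,x')|\le h_v^X(x)+h_v^X(x')$ for all $x,x'$ in the support.

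\emph{Integration over pairs.} Take the $L_2(\mu\otimes\mu)$ norm and apply Minkowski's inequality. Each of the two terms depends on a single variable, so its norm equals $\|h_v^X\|_{L_2(\mu)}$. This gives
\[
\delta_X(v)\le 2\,\|h_v^X\|_{L_2(\mu)}.
\]
A slightly sharper bound follows from $(a+b)^2\le 2a^2+2b^2$, namely $\delta_X(v)^2\le 4\,\mathbb{E}_\mu (h_v^X)^2$; the factor 2 from Minkowski already suffices for the claim.

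\emph{Averaging over directions.} Square the previous bound, take $\mathbb{E}_v$, and take square roots:
\[
\Delta_X=\bigl(\mathbb{E}_v\,\delta_X(v)^2\bigr)^{1/2}\le 2\bigl(\mathbb{E}_v\mathbb{E}_{x\sim\mu}h_v^X(x)^2\bigr)^{1/2}=2H_X.
\]
The same argument with $P_w^Y$ and $\nu$ yields $\Delta_Y\le 2H_Y$.

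\emph{Main obstacle.} The only delicate point is the identity $d_v^X=d_X\circ(P_v^X\times P_v^X)$ in the spherical case, since great-circle coordinates could in principle wrap around. This is exactly what \cref{lem:coord} supplies under the cap condition \cref{eq:cap}. Beyond that, we need measurability of $h_v^X$ jointly in $(v,x)$, which follows from continuity of the unique projection, and finiteness of all quantities, which follows from $h_v^X\le d_X(x,o)\le R$.
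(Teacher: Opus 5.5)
Your proposal is correct and follows the paper's proof essentially step for step: the isometry $d_v^X(x,x')=d_X\bigl(P_v^X(x),P_v^X(x')\bigr)$ from \cref{lem:coord}, the pointwise bound $|d_X(x,x')-d_v^X(x,x')|\le h_v^X(x)+h_v^X(x')$ from the triangle inequality, Minkowski in $L_2(\mu\otimes\mu)$, and then squaring and averaging over $v$. One small correction: the bound from $(a+b)^2\le 2a^2+2b^2$ is not sharper, since it gives the same constant $4$ as Minkowski; expanding the square exactly gives $\delta_X(v)^2\le 2\,\mathbb{E}_\mu (h_v^X)^2+2\,(\mathbb{E}_\mu h_v^X)^2$, which is the genuinely sharper form.
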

 
\begin{proof}
Fix $v,x,x'$ and write $P=P_v^X(x)$, $P'=P_v^X(x')$. By \cref{lem:coord},
$d_v^X(x,x')=d_X(P,P')$. The triangle inequality along
$x\to P\to P'\to x'$ gives
\[
    d_X(x,x')\le d_X(x,P)+d_X(P,P')+d_X(P',x')
    =d_v^X(x,x')+h_v^X(x)+h_v^X(x'),
\]
and along $P\to x\to x'\to P'$ gives
\[
    d_v^X(x,x')=d_X(P,P')\le d_X(P,x)+d_X(x,x')+d_X(x',P')
    =d_X(x,x')+h_v^X(x)+h_v^X(x').
\]
Together,
\begin{equation}\label{eq:pointwise_proj}
    \big|d_X(x,x')-d_v^X(x,x')\big|\le h_v^X(x)+h_v^X(x').
\end{equation}
Taking the $L_2(\mu\otimes\mu)$ norm and applying Minkowski's inequality,
since $x$ and $x'$ each have law $\mu$,
$\delta_X(v)\le2\|h_v^X\|_{L_2(\mu)}$. Squaring and averaging over $v$ gives
$\Delta_X^2\le4H_X^2$. The same argument applies to $\mathcal{Y}$.
\end{proof}
 
\begin{lemma}\label{lem:slice}
Under \cref{ass:geom}, for every $v,w$,
\begin{equation}\label{eq:slice}
\operatorname{GW}_2(\mathcal{X},\mathcal{Y})
\le \delta_X(v)+\sqrt{Q(v,w)}+\delta_Y(w).
\end{equation}
\end{lemma}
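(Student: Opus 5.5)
Fix $v,w$. The plan is to lift the optimal one-dimensional matching to the original spaces and bound its distortion with the triangle inequality in $L_2(\pi\otimes\pi)$.

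\emph{Lifting step.} Let $\sigma\in\Pi(\mu_v,\nu_w)$ be an optimal coupling for $Q(v,w)$. Such a coupling exists because the projected measures are supported in $[-R,R]$ by \cref{lem:coord}, so the supports are compact. Then glue $\sigma$ with the disintegrations of $\mu$ along $p_v^X$ and of $\nu$ along $p_w^Y$. Concretely, writing $\mu=\int\mu^s\,d\mu_v(s)$ and $\nu=\int\nu^t\,d\nu_w(t)$, set
\[
\pi:=\int \mu^s\otimes\nu^t\,d\sigma(s,t).
\]
By construction $\pi\in\Pi(\mu,\nu)$, and $(p_v^X\times p_w^Y)_\#\pi=\sigma$.

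\emph{Distortion bound.} Since $\operatorname{GW}_2(\mathcal{X},\mathcal{Y})\le\operatorname{dis}(\pi)$, it suffices to bound $\operatorname{dis}(\pi)$. Split the integrand as
\[
d_X(x,x')-d_Y(y,y')=\big(d_X-d_v^X\big)(x,x')+\big(d_v^X(x,x')-d_w^Y(y,y')\big)+\big(d_w^Y-d_Y\big)(y,y'),
\]
and apply Minkowski's inequality in $L_2(\pi\otimes\pi)$. The three terms are handled as follows.

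\begin{itemize}
\item The first term depends only on $(x,x')$, whose law under $\pi\otimes\pi$ is $\mu\otimes\mu$. Its norm is therefore exactly $\delta_X(v)$.
\item The third term depends only on $(y,y')$, whose law is $\nu\otimes\nu$. Its norm is exactly $\delta_Y(w)$.
\item The middle term equals $\big||s-s'|-|t-t'|\big|$ with $(s,t)=(p_v^X(x),p_w^Y(y))$. Pushing forward, its squared norm is $\int \big||s-s'|-|t-t'|\big|^2\,d\sigma\,d\sigma=Q(v,w)$, by optimality of $\sigma$ and the definition of the one-dimensional $\operatorname{GW}_2$ with $d_{\mathbb{R}}$.
\end{itemize}

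Summing the three norms gives \cref{eq:slice}.

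\emph{Main obstacle.} The step needing the most care is the measurability of the gluing. It requires that $p_v^X$ be a measurable, single-valued map on the support, and this is exactly where \cref{ass:geom} and \cref{lem:coord} enter: they guarantee uniqueness and continuity of the projection. Disintegration is then available because all the spaces involved are Polish. If one prefers to avoid disintegration, the gluing lemma gives an alternative. Glue the couplings $(\mathrm{id},p_v^X)_\#\mu$, $\sigma$, and $(p_w^Y,\mathrm{id})_\#\nu$ along their shared marginals, then take the $(x,y)$-marginal; this yields the same $\pi$ and the same bound.
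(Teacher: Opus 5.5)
Your proof is correct and is essentially the paper's argument. The paper bounds $\operatorname{GW}_2(\mathcal{X},\mathcal{X}_v)\le\delta_X(v)$ and $\operatorname{GW}_2(\mathcal{Y}_w,\mathcal{Y})\le\delta_Y(w)$ using the graph couplings $(\mathrm{id},p_v^X)_\#\mu$ and $(\mathrm{id},p_w^Y)_\#\nu$, and then cites the Gromov--Wasserstein triangle inequality (M\'emoli, 2011, Thm.~5.1(a)) through the one-dimensional spaces $\mathcal{X}_v,\mathcal{Y}_w$; you instead glue those same couplings with $\sigma$ explicitly and apply Minkowski, which is that triangle inequality's proof written out, so your version is self-contained rather than relying on the citation.
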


\begin{proof}
Define the projected metric-measure spaces
\[
\mathcal{X}_v
:= \bigl(\operatorname{supp}\mu_v,|\cdot-\cdot|,\mu_v\bigr),
\qquad
\mathcal{Y}_w
:= \bigl(\operatorname{supp}\nu_w,|\cdot-\cdot|,\nu_w\bigr).
\]
Since $\mu_v=(p_v^X)_\#\mu$, the measure
$\eta_X=(\operatorname{id},p_v^X)_\#\mu$ is a coupling of
$\mu$ and $\mu_v$. Evaluating its distortion yields
\[
\operatorname{GW}_2(\mathcal{X},\mathcal{X}_v)
\le
\left\|d_X-d_v^X\right\|_{L_2(\mu\otimes\mu)}
=\delta_X(v).
\]
Similarly, the coupling
$\eta_Y=(\operatorname{id},p_w^Y)_\#\nu$ gives
\[
\operatorname{GW}_2(\mathcal{Y}_w,\mathcal{Y})
\le \delta_Y(w).
\]
By the triangle inequality for GW
(M\'emoli, 2011, Theorem~5.1(a)),
\[
\begin{aligned}
\operatorname{GW}_2(\mathcal{X},\mathcal{Y})
&\le
\operatorname{GW}_2(\mathcal{X},\mathcal{X}_v)
+\operatorname{GW}_2(\mathcal{X}_v,\mathcal{Y}_w)
+\operatorname{GW}_2(\mathcal{Y}_w,\mathcal{Y})\\
&\le \delta_X(v)+\sqrt{Q(v,w)}+\delta_Y(w),
\end{aligned}
\]
where
$\operatorname{GW}_2(\mathcal{X}_v,\mathcal{Y}_w)
=\sqrt{Q(v,w)}$ by definition.
\end{proof}

\begin{proof}[Proof of \cref{thm:ccsgw_gw}]
The left side of \cref{eq:slice} does not depend on $(v,w)$.
Averaging the right side over $v\sim\lambda_X$,
$w\sim\lambda_Y$ and using
$\mathbb{E}Z\le(\mathbb{E}Z^2)^{1/2}$ for nonnegative $Z$,
\[
\begin{aligned}
\operatorname{GW}_2(\mathcal{X},\mathcal{Y})
&\le
\mathbb{E}_v\delta_X(v)
+\mathbb{E}_{v,w}\sqrt{Q(v,w)}
+\mathbb{E}_w\delta_Y(w)\\
&\le \Delta_X+\sqrt{S}+\Delta_Y\\
&\le 2H_X+\sqrt{S}+2H_Y,
\end{aligned}
\]
where the last step follows from \cref{lem:proj}.
\end{proof}
 
\subsection{Proof of \texorpdfstring{\cref{thm:average_isometry}}{Theorem 2}}
\label{app:proof_iso}
 
\begin{lemma}\label{lem:L2}
For every $\pi\in\Pi(\mu,\nu)$ and every measurable $g:X\to Y$,
\begin{equation}\label{eq:L2}
    \big\|d_Y\big(g(x),g(x')\big)-d_X(x,x')\big\|_{L_2(\pi\otimes\pi)}
    \le\operatorname{dis}(\pi)+2\rho_\pi(g).
\end{equation}
\end{lemma}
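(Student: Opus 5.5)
The plan is a pointwise triangle inequality, followed by Minkowski's inequality in $L_2(\pi\otimes\pi)$. Fix $\pi\in\Pi(\mu,\nu)$ and a measurable $g:X\to Y$. For $(x,y),(x',y')$ in the support of $\pi$, insert the intermediate quantity $d_Y(y,y')$:
\[
\bigl|d_Y(g(x),g(x'))-d_X(x,x')\bigr|
\le
\bigl|d_Y(g(x),g(x'))-d_Y(y,y')\bigr|
+\bigl|d_Y(y,y')-d_X(x,x')\bigr|.
\]
The second term is exactly the integrand defining $\operatorname{dis}(\pi)$ in \cref{eq:dis}.

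For the first term, use the reverse triangle inequality for the metric $d_Y$ along the quadrilateral $g(x)\to y\to y'\to g(x')$. This gives $d_Y(g(x),g(x'))\le d_Y(g(x),y)+d_Y(y,y')+d_Y(y',g(x'))$, and the symmetric inequality holds as well, so
\[
\bigl|d_Y(g(x),g(x'))-d_Y(y,y')\bigr|\le d_Y(g(x),y)+d_Y(g(x'),y').
\]
This is the same manipulation as \cref{eq:pointwise_proj} in \cref{lem:proj}.

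Next take the $L_2(\pi\otimes\pi)$ norm of the resulting pointwise bound
\[
d_Y(g(x),y)+d_Y(g(x'),y')+\bigl|d_Y(y,y')-d_X(x,x')\bigr|
\]
and apply Minkowski's inequality to split it into three norms. Under $\pi\otimes\pi$, the first coordinate pair $(x,y)$ has law $\pi$, so the norm of $d_Y(g(x),y)$ equals $\bigl(\int d_Y(g(x),y)^2\,d\pi\bigr)^{1/2}=\rho_\pi(g)$. The same holds for the primed pair, and together these contribute $2\rho_\pi(g)$. The last norm is $\operatorname{dis}(\pi)$, which yields \cref{eq:L2}.

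The only step needing care is the measurability and integrability bookkeeping. All quantities are nonnegative and measurable, so each norm is well defined in $[0,\infty]$. Minkowski's inequality holds in $[0,\infty]$, so the inequality is trivially true if $\rho_\pi(g)=\infty$. Finiteness of $\operatorname{dis}(\pi)$ follows from the bounded supports in \cref{ass:geom}, but it is not even needed. I expect no real obstacle here; the lemma is a direct combination of the triangle and Minkowski inequalities.
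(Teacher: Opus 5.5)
Your proof is correct and matches the paper's argument. The paper derives the same pointwise bound $|d_Y(g(x),g(x'))-d_X(x,x')|\le|d_Y(y,y')-d_X(x,x')|+d_Y(g(x),y)+d_Y(g(x'),y')$ from the triangle inequality along $g(x)\to y\to y'\to g(x')$ and its reverse, then applies Minkowski in $L_2(\pi\otimes\pi)$ using that each pair has law $\pi$; first isolating $|d_Y(g(x),g(x'))-d_Y(y,y')|$, as you do, is only a cosmetic reordering of those two one-sided inequalities.
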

 
\begin{proof}
Fix $(x,y),(x',y')$. The triangle inequality in $Y$ along
$g(x)\to y\to y'\to g(x')$ gives
\[
    d_Y(g(x),g(x'))-d_X(x,x')
    \le\big[d_Y(y,y')-d_X(x,x')\big]+d_Y(g(x),y)+d_Y(g(x'),y'),
\]
and along $y\to g(x)\to g(x')\to y'$ gives
\[
    d_X(x,x')-d_Y(g(x),g(x'))
    \le\big[d_X(x,x')-d_Y(y,y')\big]+d_Y(g(x),y)+d_Y(g(x'),y').
\]
Together,
\begin{equation}\label{eq:pointwise_L2}
    \big|d_Y(g(x),g(x'))-d_X(x,x')\big|
    \le\big|d_Y(y,y')-d_X(x,x')\big|+d_Y(g(x),y)+d_Y(g(x'),y').
\end{equation}
Taking the $L_2(\pi\otimes\pi)$ norm and applying Minkowski's inequality,
the first term has norm $\operatorname{dis}(\pi)$, and each of the other
two has norm $\rho_\pi(g)$ since $(x,y)$ and $(x',y')$ each have law $\pi$.
\end{proof}
 
\begin{proof}[Proof of \cref{thm:average_isometry}]
For $\pi^\star\otimes\pi^\star$-almost every pair with $x\ne x'$, we have
$x,x'\in S_X$ and hence $d_X(x,x')\ge r$, so
\[
    \left|\frac{d_Y(g(x),g(x'))}{d_X(x,x')}-1\right|
    =\frac{\big|d_Y(g(x),g(x'))-d_X(x,x')\big|}{d_X(x,x')}
    \le\frac{\big|d_Y(g(x),g(x'))-d_X(x,x')\big|}{r}.
\]
Squaring and integrating over $\{x\ne x'\}$, the left side of the first
claim is at most
\[
    r^{-1}\big\|d_Y(g(x),g(x'))-d_X(x,x')\big\|_{L_2(\pi^\star\otimes\pi^\star)}.
\]
By \cref{lem:L2} with $\pi=\pi^\star$, $\operatorname{dis}(\pi^\star)
=\operatorname{GW}_2(\mathcal{X},\mathcal{Y})$, and \cref{thm:ccsgw_gw},
this norm is at most $B$, proving the first claim. For the second, let $Z$
be the dilation deviation on $\{x\ne x'\}$ and $t:=B/(r\sqrt\epsilon)$; by
Chebyshev's inequality,
\[
    (\pi^\star\otimes\pi^\star)\big(\{x\ne x',\,|Z|>t\}\big)
    \le(B/r)^2/t^2=\epsilon.
\]
\end{proof}

\clearpage
\section{Experiment details}

\subsection{Graph anomaly detection}
\label{app_exp_anomaly}
We evaluate on five GADBench datasets~\citep{tang2023gadbench}: Reddit~\citep{kumar2019predicting,liu2022bond}, Weibo~\citep{zhao2020error}, Amazon~\citep{mcauley2013amateurs}, Elliptic~\citep{weber2019anti} and Questions~\citep{platonov2023critical}.
Following the fully-supervised protocol, we report the mean and 95\% confidence interval of test AUROC at the best validation epoch over ten splits.
CurvGAD~\citep{grover2025curvgad} retains its original hyperparameters and manifold signatures, and all of its results are reproduced under this protocol.
At every epoch, CCSGW draws $256$ random nodes and, for each constant-curvature component, projects their embeddings along random geodesic directions (number of directions $L=32$) under the current curvature, which is kept fixed within this term.
The resulting slices are standardized, and their closed-form one-dimensional Gromov--Wasserstein discrepancy is averaged over directions and component pairs.
This averaged discrepancy is added to the CurvGAD objective with the alignment weight $\lambda=10$, shared across all datasets.

The baselines we consider for graph anomaly detection tasks are:  GCN \citep{kipf2016semi}, GAT \citep{velivckovic2017graph}, SAGE \citep{hamilton2017inductive}, BernNet \citep{he2021bernnet},  ChebyNet \citep{defferrard2016convolutional}, HGCN \citep{chami2019hyperbolic}, HGAT \citep{zhang2021hyperbolic}, KGCN \citep{bachmann2020constant}, QGCN \citep{xiong2022pseudo}, DCI \citep{wang2021decoupling}, PCGNN \citep{liu2021pick}, BWGNN \citep{tang2022rethinking}, DOMINANT \citep{ding2019deep}, AnomalyDAE \citep{fan2020anomalydae}, GADNR \citep{roy2024gad}, ADAGAD \citep{he2024ada}.

\subsection{Graph node classification}
\label{app_exp_node}
We evaluate on three homophilic datasets, Cora, CiteSeer and PubMed~\citep{sen2008collective}, and two heterophilic datasets, Chameleon~\citep{pei2020geom} and Actor~\citep{tang2009social}.
Following CUSP~\citep{grover2025spectro}, we report the mean and 95\% confidence interval of test weighted F1 at the best validation epoch over ten random 60/20/20 splits.
CUSP strictly follows the configuration of the original paper: each dataset uses the reported manifold signature, i.e., the type and dimension of every constant-curvature component, namely $\mathbb{H}^{16}\times\mathbb{S}^{16}\times\mathbb{E}^{16}$ for Cora and CiteSeer, $\mathbb{H}^{16}\times\mathbb{H}^{16}\times\mathbb{E}^{16}$ for PubMed and Actor, and $\mathbb{H}^{16}\times\mathbb{S}^{16}\times\mathbb{S}^{16}$ for Chameleon, with learnable curvatures initialized as in the original paper.
All remaining hyperparameters, including the number of training epochs ($300$), are taken from the original search space, and all CUSP results are reproduced under this protocol with early stopping on validation F1.
At every epoch, CCSGW draws $256$ random nodes for homophilic and $512$ for heterophilic datasets and, for each constant-curvature component, projects their embeddings along random geodesic directions (number of directions $L=32$) under the current curvature, which also receives gradients from this term.
The resulting slices are standardized, and their closed-form one-dimensional Gromov--Wasserstein discrepancy is averaged over directions and component pairs.
This averaged discrepancy is added to the CUSP objective with the alignment weight $\lambda=1$, shared across all datasets and linearly warmed up over the first 50 epochs.

The baselines we consider for graph node classification tasks are: GCN \citep{kipf2016semi}, GAT \citep{velivckovic2017graph}, SAGE \citep{hamilton2017inductive}, HGCN \citep{chami2019hyperbolic}, HGAT \citep{zhang2021hyperbolic}, KGCN \citep{bachmann2020constant}, QGCN \citep{xiong2022pseudo}, SelfMGNN \citep{sun2022self}, ChebyNet \citep{defferrard2016convolutional}, BernNet \citep{he2021bernnet}, GPRGNN \citep{chien2020adaptive}, FiGURe \citep{ekbote2023figure}, H2GCN \citep{zhu2020beyond}, FAGCN \citep{bo2021beyond}, CurvDrop \citep{liu2023curvdrop}, ARGNN \citep{wang2026adaptive}.

\subsection{EEG-based multimodal learning}
\label{app_exp_eeg}
EEG-based multimodal learning combines brain signals with complementary modalities to support tasks such as emotion recognition, sleep staging, and cognitive assessment.
We further integrate CCSGW into EEG-MoCE \citep{zhou2026eeg}, a state-of-the-art multimodal learning framework based on hyperbolic mixture-of-curvature experts, and evaluate the resulting model on the emotion recognition tasks and two cognitive tasks.
Following \citep{zhou2026eeg}, we perform subject-wise 10-fold cross-validation on EAV ($n=42$ participants) and Cognitive ($n=26$ participants), and report test balanced accuracy (BAcc) averaged over folds.
Both EEG-MoCE and its CCSGW-augmented variant follow the original preprocessing pipeline and model configuration, including the modality-specific encoders, hyperbolic experts with learnable curvatures, and curvature-aware fusion module.
All optimization settings follow the original paper.
For CCSGW alignment, we select one modality-specific hyperbolic manifold as the reference and align the embedding distribution of each remaining modality with that of the reference modality.
The modality-specific embeddings are projected along random geodesic directions under their respective curvatures, and the resulting one-dimensional slices are standardized.
The closed-form one-dimensional Gromov--Wasserstein discrepancies are then averaged over projection directions and pairs formed by the reference modality and each remaining modality.
This averaged discrepancy is added to the original EEG-MoCE objective as a weighted alignment regularizer.

The baselines we consider for emotion recognition tasks are: AGF-IB~\cite{shou2024adversarial}, CMERC~\cite{tu2024calibrate},  CTMWA~\cite{zhang2024ctmwa}, GA2MIF~\cite{li2024ga2mif}, HEEGNet~\cite{li2026heegnet}, Hyper-MML~\cite{kang2025hypergraph},LMF~\cite{liu2018efficient}, MMML~\cite{wu2024mmml}; for the cognitive tasks are: CTMWA~\cite{zhang2024ctmwa}, EFDFNet~\cite{xu2025efdfnet}, EF-Net~\cite{arif2024efnet}, LMF~\cite{liu2018efficient}, MMML~\cite{wu2024mmml}, STA-Net~\cite{liu2025stanet}, ST2A~\cite{shi2025st2a}, TSMMF~\cite{si2025tsmmf} .

\section{Additional experiments}
\label{app_sec: additional}

\subsection{Hyperparameter sensitivity}
\label{app_sec: sensitivity}
We study the sensitivity of CCSGW to its hyperparameters on Cora with CUSP and on Reddit with CurvGAD.
In both backbones, node embeddings lie in a mixed-curvature space with $m$ constant-curvature components, and CCSGW aligns every pair of components.
Let $\mu_j$ denote the distribution of node embeddings in the $j$-th component, whose curvature is $K_j$.
The training objective is
\begin{equation}
\label{app_eq:objective}
\mathcal{L}
= \mathcal{L}_{\mathrm{task}}
+ \lambda \binom{m}{2}^{-1} \sum_{1\le j<k\le m}
\frac{1}{L}\sum_{\ell=1}^{L}
\mathrm{GW}_2^2\!\left(d_{\mathbb{R}}^2,\,
(p^{K_j}_{v_j^{\ell}})_\#\hat\mu_j,\,
(p^{K_k}_{v_k^{\ell}})_\#\hat\mu_k\right),
\end{equation}
where $\hat\mu_j$ is the empirical measure of $\mu_j$ on $n$ nodes sampled uniformly at random (the same nodes for all components), and $v_j^{1},\dots,v_j^{L}$ are i.i.d.\ uniform unit tangent directions at the origin of the $j$-th component.
Nodes and directions are redrawn at every epoch, and each projected slice is standardized as described in \cref{app_exp_anomaly}.
As in SGW (\cref{eq:sgw}), the integral over directions is replaced by an average over $L$ random directions.
CCSGW therefore has three hyperparameters: the alignment weight $\lambda$, the number of geodesic directions $L$, which controls the accuracy of this approximation, and the number of sampled nodes $n$, which sets the support size of the empirical measures.
Each component pair costs $O(L\,n\log n)$.

We vary one hyperparameter at a time around a reference configuration $(\lambda, L, n)$, set to $(1, 32, 512)$ on Cora and $(10, 32, 256)$ on Reddit, over the values on the horizontal axes of \cref{app_fig:sensitivity}.
All other settings, including the ten splits, follow \cref{app_exp_node} for Cora and \cref{app_exp_anomaly} for Reddit.
For each configuration, we report the gain $\Delta$ of the backbone trained with CCSGW over the same backbone trained without it ($\lambda=0$) on the same split and seed, measured in test weighted F1 on Cora and test AUROC on Reddit and averaged over the ten splits.

\begin{figure}[!t]
\centering
\begin{subfigure}{\textwidth}
  \centering
  \includegraphics[width=\linewidth]{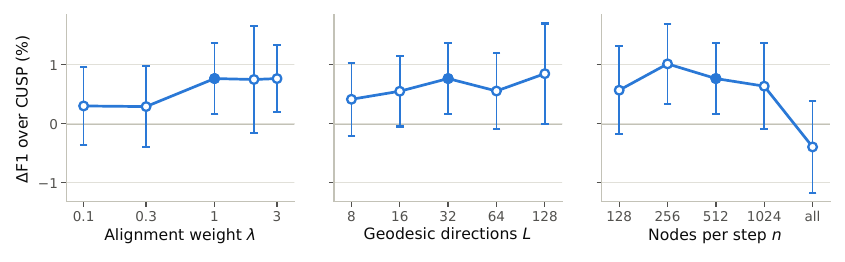}
  \caption{Node classification: CUSP\,+\,CCSGW on Cora.}
  \label{app_fig:sensitivity_cora}
\end{subfigure}
\\[4pt]
\begin{subfigure}{\textwidth}
  \centering
  \includegraphics[width=\linewidth]{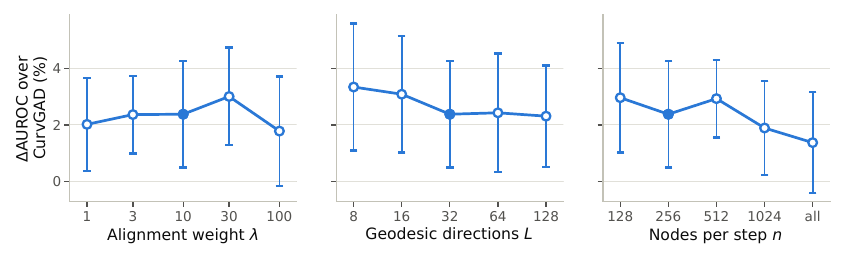}
  \caption{Graph anomaly detection: CurvGAD\,+\,CCSGW on Reddit.}
  \label{app_fig:sensitivity_reddit}
\end{subfigure}
\caption{Hyperparameter sensitivity of CCSGW.
Each panel varies one hyperparameter, namely the alignment weight $\lambda$, the number of geodesic directions $L$ or the number of sampled nodes $n$ (``all'' uses every node), while the other two are fixed at the reference configuration (filled markers).
The vertical axis shows the gain $\Delta$ of the backbone trained with CCSGW over the same backbone trained without it (gray line at zero).
Points and error bars are the mean and 95\% confidence interval over ten splits.}
\label{app_fig:sensitivity}
\end{figure}

CCSGW is robust to all three hyperparameters.
On both datasets, every configuration except one stays within the confidence interval of the reference configuration, and the gain remains positive across two orders of magnitude of $\lambda$ and a sixteenfold range of $L$.
The insensitivity to $L$ follows from redrawing the directions at every epoch, which averages the approximation error of a small $L$ over training.
Subsamples of $128$ to $1024$ nodes perform comparably, whereas using all nodes gives the smallest gain on both datasets and the only negative one, on Cora.
Subsampling therefore reduces the cost of CCSGW without sacrificing accuracy.

\subsection{Comparison with SGW and GW}
\label{app_sec: align_compare}
CCSGW differs from SGW in slicing along geodesics rather than Euclidean directions, and from GW in slicing at all.
To isolate the effect of each choice, we replace CCSGW in the CUSP objective by either loss and keep everything else fixed.
SGW shares the implementation of CCSGW, including the independent directions of every component, with the geodesic projections replaced by linear projections in the ambient coordinates, which corresponds to setting every curvature to zero.
GW is the entropic Gromov--Wasserstein discrepancy between the geodesic distance matrices of the $n$ sampled nodes, normalized by their mean distance and solved with ten projected-gradient steps of fifty Sinkhorn iterations each ($\varepsilon=0.05$).
All three losses use the weight $\lambda=1$, $L=32$ directions where applicable, the node subsample of the main experiments ($n=256$ for the homophilic and $512$ for the heterophilic datasets), and the CUSP configuration and ten splits of \cref{app_exp_node}.

\begin{table}[!ht]
\caption{Comparison of alignment losses on top of CUSP (test weighted F1 \%($\uparrow$) $\pm$ 95\% confidence interval over ten splits).
The last column is the mean gain over CUSP across the five datasets. Best result per column in bold.}
\label{app_tab:align_compare}
\centering
\begin{adjustbox}{max width=\textwidth}
\begin{tabular}{@{}lcccccc@{}}
\toprule
\textbf{Method} & Cora & CiteSeer & PubMed & Chameleon & Actor & \textbf{Av.~$\Delta$} \\ \midrule
CUSP & 85.64\scriptsize{$\pm$1.48} & 74.10\scriptsize{$\pm$0.88} & 90.34\scriptsize{$\pm$0.21} & 66.48\scriptsize{$\pm$2.21} & 38.90\scriptsize{$\pm$0.93} & -- \\ \midrule
CUSP\,+\,CCSGW & \textbf{86.66\scriptsize{$\pm$0.93}} & \textbf{74.24\scriptsize{$\pm$1.23}} & \textbf{90.39\scriptsize{$\pm$0.18}} & 67.82\scriptsize{$\pm$1.92} & 39.83\scriptsize{$\pm$0.70} & +0.70 \\
CUSP\,+\,SGW & 86.19\scriptsize{$\pm$1.00} & 73.62\scriptsize{$\pm$1.22} & 90.32\scriptsize{$\pm$0.14} & 67.49\scriptsize{$\pm$1.23} & 39.23\scriptsize{$\pm$0.84} & +0.28 \\
CUSP\,+\,GW & 86.18\scriptsize{$\pm$1.16} & 73.37\scriptsize{$\pm$1.03} & 90.31\scriptsize{$\pm$0.25} & \textbf{68.48\scriptsize{$\pm$1.10}} & \textbf{39.98\scriptsize{$\pm$0.91}} & +0.57 \\
\bottomrule
\end{tabular}
\end{adjustbox}
\end{table}

\cref{app_tab:align_compare} shows that CCSGW improves over SGW on all five datasets, with a mean gain over CUSP of $+0.70$ against $+0.28$.
Since the two losses differ only in the direction of the slices, most of the benefit comes from slicing along geodesics, which costs the same as slicing along Euclidean directions (\cref{app_sec: cost}).
CCSGW is also on par with GW: it is better on the three homophilic datasets and slightly worse on Chameleon and Actor.
Replacing the full Gromov--Wasserstein discrepancy by its sliced version therefore loses no accuracy, while it reduces the cost from quadratic to log-linear in the number of sampled nodes (\cref{app_sec: cost}). CCSGW thus combines the accuracy of GW with the cost of SGW.

\subsection{Computational cost}
\label{app_sec: cost}
CCSGW and SGW cost $O(L\,n\log n)$ per component pair, whereas entropic GW on geodesic distances requires $O(n^2)$ memory and time per Sinkhorn iteration.
We measure the wall-clock time of one evaluation of each alignment loss, including the backward pass, as a function of the number of sampled nodes $n$, on a single A100 GPU (median of seven repetitions after one warm-up call).
Embeddings are drawn at random in the mixed-curvature space of each backbone, and every loss aligns the same $n$ nodes with $L=32$ directions and standardized slices.
Since CCSGW aligns every pair of components, the number of aligned pairs is three for the CUSP signature $\mathbb{H}^{16}\times\mathbb{S}^{16}\times\mathbb{E}^{16}$ and one for the CurvGAD signature $\mathbb{H}^{24}\times\mathbb{S}^{24}$.

\begin{figure}[!t]
\centering
\begin{subfigure}{0.48\textwidth}
  \centering
  \includegraphics[width=\linewidth]{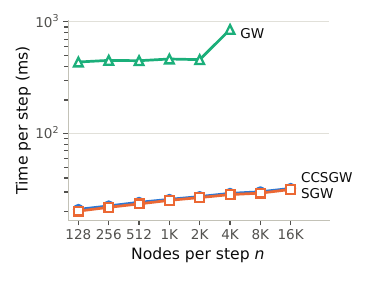}
  \caption{CUSP: three components, three aligned pairs.}
  \label{app_fig:align_timing_cusp}
\end{subfigure}
\hfill
\begin{subfigure}{0.48\textwidth}
  \centering
  \includegraphics[width=\linewidth]{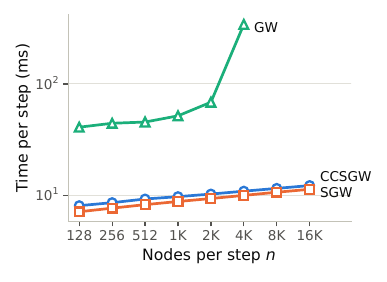}
  \caption{CurvGAD: two components, one aligned pair.}
  \label{app_fig:align_timing_curvgad}
\end{subfigure}
\caption{Time of one alignment-loss evaluation (forward and backward) as a function of the number of sampled nodes $n$, on a single A100 GPU (median of seven runs, log--log axes).
GW runs out of memory beyond $n=4096$.}
\label{app_fig:align_timing}
\end{figure}

\cref{app_fig:align_timing} shows that the geodesic projections come essentially for free: CCSGW and SGW coincide at every $n$, and both remain in the millisecond range over the whole tested range.
GW is one to two orders of magnitude slower from a few thousand nodes on, where its $n^2$ cost matrices dominate, and exhausts the $40$\,GB of memory at $n=8192$.
The higher cost on CUSP reflects the larger number of aligned pairs.
For the subsample sizes used in our experiments ($n\le512$), CCSGW adds only a small fraction of the cost of one training epoch of either backbone.

\end{document}